\documentclass[10pt]{article}

\usepackage[utf8]{inputenc}
\usepackage[T1]{fontenc}

\usepackage{epsf}
\usepackage{amsmath}

\allowdisplaybreaks

\usepackage[showframe=false]{geometry}
\usepackage{changepage}

\usepackage{epsfig}
\usepackage{amssymb}

\usepackage{amsthm}
\usepackage{setspace}
\usepackage{cite}
\usepackage{mcite}

\usepackage{algorithmic}  
\usepackage{algorithm}

\usepackage{shadow}
\usepackage{fancybox}
\usepackage{fancyhdr}

\usepackage{color}
\usepackage[usenames,dvipsnames,svgnames,table]{xcolor}
\newcommand{\bl}[1]{\textcolor{blue}{#1}}

\definecolor{mypurple}{rgb}{.4,.0,.5}

\usepackage[hyphens]{url}

\usepackage[colorlinks=true,
            linkcolor=black,
            urlcolor=black,
            citecolor=purple]{hyperref}

\usepackage{breakurl}

\def\y{{\bf y}}

\def\x{{\bf x}}

\def\x{{\mathbf x}}

\def\u{{\bf u}}

\def\x{{\bf x}}
\def\y{{\bf y}}
\def\z{{\bf z}}
\def\q{{\bf q}}

\def\c{{\bf c}}

\def\h{{\bf h}}

\def\bg{\bar{\g}}

\def\cH{{\mathcal H}}
\def\cG{{\mathcal G}}
\def\cZ{{\mathcal Z}}

\def\be{\begin{equation}}
\def\ee{\end{equation}}
\def\ba{\left[\begin{array}}
\def\ea{\end{array}\right]}

\def\u{{\bf u}}

\def\x{{\bf x}}
\def\y{{\bf y}}
\def\z{{\bf z}}
\def\q{{\bf q}}

\def\c{{\bf c}}

\def\p{{\bf p}}

\def\1{{\bf 1}}

\def\g{{\bf g}}
\def\0{{\bf 0}}

\def\erfinv{\mbox{erfinv}}
\def\erf{\mbox{erf}}
\def\erfc{\mbox{erfc}}

\def\mR{{\mathbb R}}
\def\mN{{\mathbb N}}
\def\mE{{\mathbb E}}
\def\mS{{\mathbb S}}
\def\mP{{\mathbb P}}

\def\lp{\left (}
\def\rp{\right )}

\def\y{{\bf y}}

\def\x{{\bf x}}

\def\x{{\mathbf x}}

\def\u{{\bf u}}

\def\x{{\bf x}}
\def\y{{\bf y}}
\def\z{{\bf z}}
\def\q{{\bf q}}

\def\c{{\bf c}}

\def\h{{\bf h}}

\def\cH{{\cal H}}

\def\be{\begin{equation}}
\def\ee{\end{equation}}
\def\ba{\left[\begin{array}}
\def\ea{\end{array}\right]}

\def\u{{\bf u}}

\def\x{{\bf x}}
\def\y{{\bf y}}
\def\z{{\bf z}}
\def\q{{\bf q}}

\def\c{{\bf c}}

\def\p{{\bf p}}

\def\({\left (}
\def\){\right )}

\def\1{{\bf 1}}

\def\q{{\bf q}}

\def\g{{\bf g}}
\def\0{{\bf 0}}

\def\cX{{\mathcal X}}
\def\cY{{\mathcal Y}}
\def\cZ{{\mathcal Z}}
\def\cL{{\mathcal L}}

\def\erfinv{\mbox{erfinv}}

\usepackage{xcolor}
\usepackage{color}

\definecolor{darkgreen}{rgb}{0, 0.4,0}

\definecolor{purplebrown}{rgb}{0.5,0.1,0.6}

\definecolor{ultclupcol}{rgb}{0.1,0.5,0.5}

\definecolor{mytrycolor}{rgb}{0.5,0.7,0.2}

\definecolor{ultclupcola}{rgb}{.5,0,.5}

\definecolor{shadebrown}{rgb}{0.1,0.1,0.9}
\definecolor{lightblue}{rgb}{0.2,0,1}

\usepackage{fancybox}
\usepackage{graphicx}
\usepackage{epstopdf}
\usepackage{epsfig}
\usepackage{wrapfig}
\usepackage{subfigure}

\usepackage{xcolor}
\usepackage{tcolorbox}
\tcbuselibrary{skins}

\newtcbox{\xmybox}{on line,
arc=7pt,
before upper={\rule[-3pt]{0pt}{10pt}},boxrule=0pt,
boxsep=0pt,left=6pt,right=6pt,top=0pt,bottom=0pt,enhanced, coltext=blue, colback=white!10!yellow}

\newtcbox{\xmyboxa}{on line,
arc=7pt,
before upper={\rule[-3pt]{0pt}{10pt}},boxrule=0pt,
boxsep=0pt,left=6pt,right=6pt,top=0pt,bottom=0pt,enhanced, colback=white!10!yellow}

\newtcbox{\xmyboxb}{on line,
arc=7pt,
before upper={\rule[-3pt]{0pt}{10pt}},boxrule=1pt,colframe=darkgreen!100!blue,
boxsep=0pt,left=6pt,right=6pt,top=0pt,bottom=0pt,enhanced, colback=white!10!yellow}

\newtcbox{\xmyboxc}{on line,
arc=7pt,
before upper={\rule[-3pt]{0pt}{10pt}},boxrule=.7pt,colframe=blue!100!blue,
boxsep=0pt,left=6pt,right=6pt,top=0pt,bottom=0pt,enhanced, coltext=blue, colback=white!10!yellow}

\newtcbox{\xmytboxa}{on line,
arc=7pt,
before upper={\rule[-3pt]{0pt}{10pt}},boxrule=.0pt,colframe=pink!50!yellow,
boxsep=0pt,left=6pt,right=6pt,top=0pt,bottom=0pt,enhanced, coltext=white, colback=blue!40!red}

\newtcbox{\xmytboxb}{on line,
arc=7pt,
before upper={\rule[-3pt]{0pt}{10pt}},boxrule=.0pt,colframe=pink!50!yellow,
boxsep=0pt,left=6pt,right=6pt,top=0pt,bottom=0pt,enhanced, coltext=white, colback=white!40!green}

\makeatletter
\newcommand\subsubsubsection{\@startsection{paragraph}{4}{\z@}{-2.5ex\@plus -1ex \@minus -.25ex}{1.25ex \@plus .25ex}{\normalfont\normalsize\bfseries}}
\newcommand\subsubsubsubsection{\@startsection{subparagraph}{5}{\z@}{-2.5ex\@plus -1ex \@minus -.25ex}{1.25ex \@plus .25ex}{\normalfont\normalsize\bfseries}}
\makeatother

\newtheorem{theorem}{Theorem}

\newtheorem{corollary}{Corollary}

\begin{document}

\begin{singlespace}

\title {Capacity of the Hopfield-Hebbian dynamics with correlated patterns  
}
\author{
\textsc{Mihailo Stojnic
\footnote{e-mail: {\tt flatoyer@gmail.com}} }}
\date{}
\maketitle

\centerline{{\bf Abstract}} \vspace*{0.1in}
 
Allowing for small fraction of errors in retrieved stored binary patterns, Hopfield-Hebbian neural network (NN) associative memory dynamics  \cite{Hop82} achieves linearly scaled capacity. The famous proportionality constant  $\alpha_c\approx 0.14$ is obtained in a large dimensional scenario as the ratio of the number of reliably stored patterns $m$ and the patterns' length $n$ under the assumption that uniformly randomly chosen binary patterns are \emph{independent} among themselves. 

While such an idealized scenario is now well understood, more practical scenarios with correlated/noisy  datasets remaining challenging even for classical statistical mechanics replica approaches. We consider precisely such practical settings where each \emph{archetype} pattern is replaced with $M$ correlated \emph{examples} with parameter $r$ governing the quality of correlation. We develop a powerful Random Duality Theory (RDT) based framework to analyze correlated data and  already on the first level of RDT lifting obtain capacity characterizations that remarkably well match behavior exhibited through simulated dynamics. Moreover, characterizations explicitly feature correlation parameters $M$ and $r$, which allows to precisely determine their effects on phase-transitioning behavior of the underlying dynamics. 

Our results are also a well-fitting complement to the existing literature. Namely, to circumvent challenging statistical mechanics analyses, an alternative combination of spectral and replica methods is considered in \cite{AglMFV26}. We confirm that \cite{AglMFV26}'s elegant spectral characterizations are excellent approximations in the low correlated regime. At the same time, we also close the gap in the regimes with higher correlation levels where spectral methods deviate from the practical dynamics performance.

\vspace*{0.25in} \noindent {\bf Index Terms: Hopfield-Hebbian dynamics; Associative memory capacity; Random duality}.

\end{singlespace}

\section{Introduction}
\label{sec:back}

The early spin-glass neural networks (NN) forms  \cite{PasFig77} were superseded by the  Hopfield's auto-associative memories propositions \cite{Hop82}. The main idea centers around the following concept: starting in a vicinity of an already stored/memorized pattern, an iterative dynamics remains trapped by local optima not far away and allows retrieval of the stored pattern. In other words, patterns possess the so-called basins of attraction that simple iterative dynamics can not escape ensuring that a pattern is properly retrieved (and not confused by an unwanted convergence to a different one). The memory capacity -- defined as the maximum allowable number of the stored patterns for which the proper retrieval is possible -- is the key feature that describes the effectiveness of these types of associative memories. As such, it has been thoroughly studied since the appearance of the  Hopfield's introductory paper \cite{Hop82}. We start by surveying some of the key achieved milestones. That will eventually allow us to properly  contextualize our own results.

\subsection{Prior work and  contextualization}
\label{sec:prior}

Trying to model cognitive type of thinking, Hebb considered an associative type of learning principle \cite{Hebb49}. In \cite{Hop82}, Hopfield further connected such learning principles to a basic iterative dynamics. Overwhelming numerical evidence suggested that the proposed dynamics might exhibit a remarkable convergence property, where (rather quickly) one ends up either in one of the $n$-dimensional stored binary patterns or in its a close proximity. In a way, provided that their number is not too high, the stored patterns are observed as stable and surrounded by sizeable basins of attraction, allowing for their efficient retrieval.

Utilizing a simple Gaussian noise argument and considering \emph{independent} patterns, \cite{Hop82} showed that (with probability approaching 1 as $n\rightarrow \infty$) the networks associative capacity (maximal number of reliably stored patterns) is approximately $m\approx \frac{n}{2\log(n)}$. Hopfield observed an even more remarkable property: if a small fraction of errors is allowed in retrieved patterns, then the so-called \emph{linear} capacity scaling $m\sim n$ is possible as well. Moreover, Gaussian noise arguments allowed for a very precise predition of the underlying linear proportionality $\alpha\triangleq \lim_{n\rightarrow \infty}\frac{m}{n}\approx 0.14$ achieved with only $0.4\%$ of errors. Such a set of developments is truly remarkable and to this day remains one of the most fascinating analytical NN results. 

As Hopfield's predictions were basically indistinguishable from the results obtained simulating proposed dynamics, an avalanche of research regarding various related topics followed through the better part of the last four decades. Some of the most relevant analytical results that particularly relate to our own results are discussed below. As almost all of them consider binary patterns statistical context similar or identical to the one from \cite{Hop82}, we skip reemphasizing it. Along the same lines, here and later on throughout the paper, we often skip repetitively stating that the known results hold with high probability (i.e., with probability approaching 1 as the patterns' size/length $n\rightarrow\infty$).

\textbf{\underline{\emph{Exact retrieval:}}} Early Hopfield's noise considerations, were followed by the memory capacity analysis via the statistical physics replica methods in  \cite{AmiGutSom85}. Under the replica symmetry (RS) ansatz, the very same $m\approx \frac{n}{2\log(n)}$ estimate for \emph{exact} retrieval were obtained. The very first set of mathematically rigorous results in this direction arrived with the appearance of \cite{MPRV87}, where sublinear exact retrieval scaling of \cite{Hop82,AmiGutSom85} was matched. In \cite{FST00}, the RS estimate  $\frac{n}{2\log(n)}$ from \cite{AmiGutSom85} was established as a rigorous lower bound.  Results of \cite{MPRV87} were then in a way extended in \cite{KomPat88}  where existence of a nonzero basin of attraction around patterns is shown. Strong numerical evidence suggesting likely impossibility of \emph{exact} retrieval in regimes with the number of patterns $m$ scaling stronger than $\sim \frac{n}{\log(n)}$ was presented in  \cite{MPRV87} as well. This effectively suggested that to achieve a linear scaling, consideration of approximate retrieval (where one allows for a small fraction of errors in retrieved patterns) might be needed. This again was in an excellent agreement with early Hopfield observations \cite{Hop82}.

\textbf{ \underline{\emph{Approximate retrieval:}}} As stated above, Hopfield's rough analytical arguments coupled with strong numerical algorithmic evidence suggested that the precise value of the underlying linear proportionality in approximate retrieval is likely around $\alpha\triangleq \lim_{n\rightarrow \infty}\frac{m}{n}\approx 0.14$. Existence of a basin of attraction around each of the stored patterns emerged as a key concept driving the success of Hopfield propositions.

The so-called AGS (Amit-Gutfreund-Sompolinsky) basin of attraction was analyzed in \cite{AmiGutSom85} via replica methods (assuming RS ansatz).  It was obtained that Hopfield dynamics converges to a local energy (objective) minima within the basin of attraction (thereby staying trapped in a close vicinity of the stored pattern) provided that  $\alpha\approx 0.137906$. \cite{AmiGutSom85} further predicted that potential additional replica symmetry breaking (RSB) corrections might not necessarily be quantitatively substantial. This prediction was confirmed in \cite{SteKuh94}, where negligibly better estimates $\alpha\approx 0.138186$ and $\alpha\approx 0.138187$ were obtained with 1RSB and 2RSB ansatz, respectively. The AGS basins were further analyzed in \cite{FST00} where $\alpha\geq 0.113$ estimate was obtained and in \cite{Stojnichebhop24} where both RS and 1RSB estimates were matched via fully lifted RDT.

As an alternative to the above mentioned AGS basin, the NLT basin was considered in  \cite{Newman88,Louk94,Louk94a,Louk97}.  In addition to the AGS type of energy well around patterns, MLT imposes that the local energy maximum within the well must not be smaller than the one already achieved by the pattern itself. Due to further restrictions, NLT basin capacity estimates are automatically lower bounds for the corresponding AGS ones. Initial  NLT basin capacity estimate $\alpha\geq 0.056$ obtained in \cite{Newman88} was first improved to $\alpha\geq 0.071$ in \cite{Louk94,Louk94a}   and then to $\alpha\geq 0.08$ in  \cite{Tal98}. In  \cite{Stojnichebhop24}, the corresponding first and second lifting level RDT estimates, $\alpha\approx 0.1295$ and $\alpha\approx 0.1298$,  were obtained as well.

\textbf{\underline{\emph{Different learning rules:}}} If one deviates from the Hebbian learning principle, capacity can grow beyond linear $n$ scaling. The Hebbian rule assumes the quadratic (polynomial of degree 2) objective. Alternatively, utilizing an objective based on a general overlap power (polynomial of degree) $p$, the  capacity of \emph{exact}  and \emph{approximate} retrievals scale as $m\sim \frac{n^{p-1}}{\log(n)}$  and   $m\sim n^{p-1}$, respectively   \cite{GarMult87,AAAABGLP23,KroHop16}. On the other hand, relying on the exponential interactions, allows for the exponential capacity growth \cite{DHLUV17,Rametal21,Vasetal17,LucMez24}. One the one hand, these results allow for a substantial increase of the associative capacity. On the other, the connection between the employed learning principles and the neurological behavior might not necessarily as clear as in model with the Hebbian rule.

\textbf{ \underline{\emph{Correlated/noisy patterns:}}} Majority of the above works consider a common scenario, where the patterns are \emph{independent} among themselves. Such an idealized context does provide a sufficiently simplified model that allows for analytical tractability. Moreover, results obtained for such models provide an intuition that is presumed to be useful for a better understanding of more complex NNs.  In practice the datasets are not as ideal. Instead of being independent the patterns are likely correlated. Such correlations may act as a noise or even portray the presence of some intrinsic features (data redundancy or heterogeneity) that go beyond what can be captured through presumed pattern independence. Along the same lines, if one moves away from the simple pattern retrieval tasks and considers more challenging sequence retrieval, disentanglement, generalization or categorization, then again the idealistic nature of independent patterns  might not suffice \cite{BranchAre92,LeuzziPRT22,Alemanno23,AglABSRF25,Kalaj25}.

Analytical treatments of generic practical correlated scenarios are significantly more challenging. Nonetheless, as understanding of NN's functioning deepened over the last several years, a lot of effort has been put towards achieving further progress.  A particularly successful has been a line of work related to archetype/example patterns. As it is also very closely related to the topics that we study in this paper, we discuss it next in more detail.

\textbf{ \underline{\emph{Archetype/example patterns:}}}  A type of data often used to highlight the correlations effects assumes that to each pattern -- called \emph{archetype} or ground truth -- from a set of independent patterns, one associates its $M$ slightly independently perturbed \emph{example} versions. Perturbations are typically realized by multiplying original pattern elements by independent Bernoulli random variables of mean $r\in[0,1]$ (for $r=1$, one has the (redundant) standard Hopfield-Hebbian model with each pattern repeated $M$ types). When $r<1$, the mathematical problems associated with approximate retrieval are then also tightly connected to generalization, inference, and disentanglement as well \cite{Alemanno23,AglMFV26,AglLF24}. However,  compared to plain (uncorrelated) scenarios, obtaining underlying statistical mechanics pictures is significantly more complicated.

Recently, spectral methods have emerged as particularly successful analytical alternatives to classical replica methods \cite{AglMFV26,AglLF24}. The underlying ideas are truly remarkable and rely on spectral fragmentation (in a way resembling the famous BBP phase transition observed in the so-called spiked models \cite{BBP05}). As system parameters that govern correlations change, the spectrum of particularly designed relevant system matrices changes as well. The changes are of quantitative type until parameters' values reach phase-transition regions, where a qualitative change -- fragmenting of the spectrum from a single bulk structure into a two-pieces formation -- happens. Onset of such a fragmentation is then predicted as a conceivably accurate approximation of the underlying retrieval phase-transition -- in a direct analogy with the appearance of the so-called spectral outliers in BBP transitions considerations \cite{BBP05}. Moreover, not only are these ideas conceptually beautiful, they give excellent practical results as well. In particular, combining spectral and replica theory  \cite{AglMFV26} obtained very elegant single line explicit expression that approximately characterizes impact of correlation parameters $M$ and $r$ on capacity phase-transitions. In case of so-called full rank ($\alpha M>1$) and low correlated (smaller $r$) data, the approximations obtained in \cite{AglMFV26} very closely approach the true performance of the underlying retrieval dynamics. As correlations increase the predictive power decreases and they slowly deviate from the dynamics performance.

\subsection{Our contributions}
\label{sec:contrib}

In this paper we consider the very same \emph{archtypes/examples} set of correlated patterns. We develop Random Duality Theory (RDT) based framework for the analysis of the retrieval (or analogously inference/generalization) dynamics. Utilizing the first lifted level of the framework, we obtain retrieval capacity estimates that are explicit functions of both key correlation parameters $M$ and $r$. For concreteness, we focus on the full rank data (regime where spectral methods tend to show slight deviations from practical dynamics). Our results confirm that spectral predictions from \cite{AglMFV26}  are indeed excellent approximations in the low correlations regimes (precisely as observed in \cite{AglMFV26}). Moreover, we also manage to close the gap for higher correlations. In particular, after practically running the dynamics, we observe that already for fairly small problem dimensions (on the order of $\sim 1000$), simulation results are in an excellent agreement with our theoretical predictions in the entire $r\in[0,1]$ range and for all $M$. Implementing the fully lifted RDT machinery (i.e. conducting the analysis and numerical evaluations on higher lifting levels) would keep closing the gap further. However, keeping in mind that in the ideal scenario (with $r=1$ and $M$ redundant patterns), the capacity estimates obtained on the first ($\approx 0.1379$) and second ($\approx 0.1381$) lifting level differ by $\sim 0.01\%$, no significant corrections on higher lifting levels are expected.

\section{Mathematical preliminaries}
 \label{sec:mathsetup}

We consider standard statistical Hopfield archetype/examples setup. To that end, let $\bg^{(i)}\in \mS_H^n,i=1,2,\dots,m,$ be $m$ independently and uniformly randomly chosen binary patterns ($\mS_H^n$ is the unit $n$-dimensional Hamming sphere, i.e. $\mS_H^n\triangleq\left \{-\frac{1}{\sqrt{n}},\frac{1}{\sqrt{n}}\right \}^n$).  We call these patterns \emph{archetypes} and denote by $\bg_j^{(i)},j=1,2,\dots,n,$ their $j$-th components. For a fixed correlation parameter $r\in[0,1]$, we associate with each $\bg^{(i)}$   $M$ correlated patterns (so-called \emph{examples}), $\bg^{(i,k)}$,  in the following way
\begin{eqnarray}
\bg_j^{(i,k)} = \eta_j^{(k)} \bg_j^{(i)} \quad \mbox{ and } \quad \mP\lp \eta_j^{(k)}=\pm 1\rp =\frac{1\pm r}{2}\quad \mbox{ with }\quad  k=1,2,\dots,M . \label{eq:ex0b0}
\end{eqnarray}
$\eta_j^{(k)}$ are independent among themselves for any choice $j$ and $k$. For any $i$, we also set
\begin{eqnarray}
\bar{\cG}^{(i)} =  \left \{ \bg^{(i,k)} \right \}_{k=1,2,\dots,M} , i=1,2,\dots,m . \label{eq:ex0b1}
\end{eqnarray}
Basically, $\bar{\cG}^{(i)}$ is an array of $M$ patterns associated via correlation (\ref{eq:ex0b0}) with $\bg^{(i)}$. We further set
\begin{eqnarray}
\cG\triangleq \sum_{i=1}^{m}  \sum_{k=1}^{M} \bg^{(i,k)}\lp\bg^{(i,k)}\rp^T, \label{eq:ex0}
\end{eqnarray}
and consider the following Hopfield sequential dynamics
\begin{eqnarray}
\bar{\x}_i(t+1)=\mbox{sign}\lp\sum_{j=1,j\neq i}^{n}\cG_{i,j}\bar{\x}_j(t)\rp. \label{eq:ex0a0}
\end{eqnarray}
Here and throughout the paper, we adopt notational convention that $\cG_{i,j}$ stands for the $ij$-th component of $\cG$ and $\cG_{i,:}$ is the $i$-th row of $\cG$. Defining the energy objective as
\begin{eqnarray}
\cH(\x)\triangleq-\x^T\cG\x, \label{eq:ex0a1}
\end{eqnarray}
we easily have that the above dynamics converges to its a local minimum. Taking $\bar{\x}$ as the dynamics converging point
\begin{eqnarray}
\bar{\x}=\lim_{t\rightarrow \infty} \bar{\x}(t), \label{eq:ex0a2}
\end{eqnarray}
we have for the local energy minimum achieved at  $\bar{\x}$
\begin{eqnarray}
\cH(\bar{\x})=-\bar{\x}^T\cG\bar{\x}. \label{eq:ex0a3}
\end{eqnarray}
Moreover,
\begin{eqnarray}
\cH(\bar{\x})\leq \min_{\|\x-\bar{\x}\|_2=\frac{2}{\sqrt{n}},\x\in S_H^n} -\x^T\cG\x. \label{eq:ex0a4}
\end{eqnarray}
Within the standard Hopfield retrieval, the main idea is that (provided ``sufficient pattern orthogonality'') the converging point may exactly or approximately match one of the stored patterns. In other words, it may happen that $\bar{\x}^T\bg^{(i)}=1$ or $\bar{\x}^T\bg^{(i)}=1-2\delta$, for a small $\delta>0$. The same philosophy applies here as well. The difference though is that this time around the $\bg^{(i)}$ are not the patterns that one can access (it should be noted that they are not parts of $\cG$ and do not participate in dynamics (\ref{eq:ex0a0})). Instead they are the archetypes and one wonders whether these ground truth patterns can be recovered even though they can not be accessed. As we will see, analogously to  \cite{Hop82,AmiGutSom85,MPRV87,FST00,Stojnichebhop24} this will again turn out to be possible even in the linear scaling regime. However, the critical number of patterns for which it will be possible will be a function of underlying correlations parameters $M$ and $r$ that govern to what extent the ground truth archetypes, $\bg^{(i)}$, are hidden within their associated examples groups $\bar{\cG}^{(i)}$.

\subsection{Basin of attraction and dynamics capcity}
 \label{sec:agsbasin}

The basin of attraction that we consider is conceptually similar to AGS \cite{AmiGutSom85}. However, there are also a few technical differences that we highlight below. 

For concreteness and easiness of writing, we take without loss of generality $\bg^{(1)}=\frac{\1}{\sqrt{n}}$ as the targeted archetype pattern. Analogously to the AGS considerations from \cite{AmiGutSom85}, one can define the capacity of the dynamics (\ref{eq:ex0a0}) as the largest $\alpha\triangleq \lim_{n\rightarrow\infty} \frac{m}{n}$ such that for some (small) $\delta>0$
\begin{eqnarray}
\hat{\x}^{(AGS)}(\delta) \triangleq \mbox{arg min}_{\|\x-\bg^{(1)}\|_2^2=4\delta,\x\in S_H^n} -\x^T\cG\x. \label{eq:ex0a5}
\end{eqnarray}
and
\begin{eqnarray}
\cH(\bar{\x}) \leq  \min \lp \cH(\bg^{(1)}), \cH(\hat{\x}^{(AGS)}(\delta))  \rp. \label{eq:ex0a6}
\end{eqnarray}
We define
\begin{eqnarray}
\hat{\xi}_1(\delta) \triangleq \lim_{n\rightarrow\infty}\frac{1}{n}\min_{\|\x-\bg^{(1)}\|_2^2=4\delta,\x\in S_H^n} -\x^T\cG\x, \label{eq:ex0a9}
\end{eqnarray}
and for a $\tilde{\x}$ such that $\tilde{\x}^T\x =1-2\delta$ consider the following approximation
\begin{eqnarray}
\tilde{\xi}_1(\delta) = - \sum_{k=1}^M \tilde{\x}^T\bg^{(1,k)} \lp \bg^{(1,k)}\rp^T\tilde{\x} + \lim_{n\rightarrow\infty}\frac{1}{n}
\min_{\|\x-\bg^{(1)}\|_2^2=4\delta,\x\in S_H^n} -\sum_{i=2}^{m}\sum_{k=1}^{M}\x^T\bg^{(i,k)} \lp \bg^{(i,k)}\rp^T\x. \label{eq:ex0a10}
\end{eqnarray}
Setting
\begin{eqnarray}
 \bar{G}^{(i)}\triangleq \begin{bmatrix}
               \lp\bg^{(i,1)}\rp^T \\ \lp\bg^{(i,2)}\rp^T \\ \vdots \\ \lp\bg^{(i,M)}\rp^T
             \end{bmatrix}
\quad \mbox{ and } \quad
 \bar{G}\triangleq \begin{bmatrix}
               \bar{G}^{(2)} \\ \bar{G}^{(3)}  \\ \vdots \\ \bar{G}^{(m)} 
             \end{bmatrix}, \label{eq:ex0a11}
\end{eqnarray}
allows to rewrite (\ref{eq:ex0a10}) as
\begin{eqnarray}
\tilde{\xi}_1(\delta)
& = & - \sum_{k=1}^M \tilde{\x}^T\bg^{(1,k)} \lp \bg^{(1,k)}\rp^T\tilde{\x} + \lim_{n\rightarrow\infty} \frac{1}{n}
\min_{\|\x-\bg^{(1)}\|_2^2=4\delta,\x\in S_H^n} - \x^T\bar{G}^T\bar{G}\x \nonumber \\
& = & - \sum_{k=1}^M \tilde{\x}^T\bg^{(1,k)} \lp \bg^{(1,k)}\rp^T\tilde{\x} + \lim_{n\rightarrow\infty} \frac{1}{n}
\min_{\|\x-\bg^{(1)}\|_2^2=4\delta,\x\in S_H^n} - \|\bar{G}\x\|_2^2 \nonumber \\
& = & - \sum_{k=1}^M \tilde{\x}^T\bg^{(1,k)} \lp \bg^{(1,k)}\rp^T\tilde{\x} 
- \lim_{n\rightarrow\infty} \frac{1}{n} \max_{\|\x-\bg^{(1)}\|_2^2=4\delta,\x\in S_H^n}  \|\bar{G}\x\|_2^2 \nonumber \\
& = & - \sum_{k=1}^M \tilde{\x}^T\bg^{(1,k)} \lp \bg^{(1,k)}\rp^T\tilde{\x} -\bar{\xi}(\delta)^2, \label{eq:ex0a12}
\end{eqnarray}
where
\begin{eqnarray}
\bar{\xi}(\delta) \triangleq \lim_{n\rightarrow\infty} \frac{1}{\sqrt{n}} \max_{\|\x-\bg^{(1)}\|_2^2=4\delta,\x\in S_H^n}  \|\bar{G}\x\|_2. \label{eq:ex0a13}
\end{eqnarray}
As all relevant quantities will concentrate, we find it useful to note
\begin{eqnarray}
\mE \,\tilde{\xi}_1(\delta)
 & = & - \mE\, \sum_{k=1}^M \tilde{\x}^T\bg^{(1,k)} \lp \bg^{(1,k)}\rp^T\tilde{\x} - \mE \, \bar{\xi}(\delta)^2
 = -(1-2\delta)^2r^2 M  - \mE \, \bar{\xi}(\delta)^2. \label{eq:ex0a12b0}
\end{eqnarray}
The optimization in (\ref{eq:ex0a13}) can be reformulated as
\begin{eqnarray}
\bar{\xi}(\delta) \triangleq \lim_{n\rightarrow\infty}\frac{1}{\sqrt{n}} \max_{\x\in \cX(\delta)}\max_{\bar{\y}\in\mS^{mM}}  \bar{\y}^T\bar{G}\x, \label{eq:ex0a14}
\end{eqnarray}
where $\mS^{mM}$ is the unit $mM$-dimensional Euclidean sphere, and for $\delta\in [0,1]$
\begin{eqnarray}
\cX(\delta) \triangleq \left \{\x \,\, | \quad \|\x-\bg^{(1)}\|_2^2=4\delta,\x\in S_H^n\right \}. \label{eq:ex0a15}
\end{eqnarray}
To make the presentation easier, we switch from Bernoulli to Gaussian variables (concentrations and the Lindeberg variant of the central limit theorem (see, e.g., \cite{Chatterjee06}, for its a particularly elegant implementation)  allow for such universality switch).  However, one also has to keep in mind that the correlations within $\bar{G}^{(i)}$ need to be preserved. To that end, we consider the following Gaussianized variant of (\ref{eq:ex0a14}) 
\begin{eqnarray}
\xi(\delta) \triangleq \lim_{n\rightarrow\infty}\frac{1}{\sqrt{n}} \max_{\x\in \cX(\delta)}\max_{\bar{\y}\in\mS^{mM}}  \bar{\y}^T  \sqrt{I_{m\times m}\otimes ( (1-r^2)I_{M\times M} + r^2 \1_M\1_M^T )}  G\x, \label{eq:ex0a14b0}
\end{eqnarray}
where $G\in\mR^{mM\times n}$ is comprised of independent standard normals and, analogously to (\ref{eq:ex0a11}), can be written in the following block form
\begin{eqnarray}
 G^{(i)}\triangleq \begin{bmatrix}
               \lp\g^{(i,1)}\rp^T \\ \lp\g^{(i,2)}\rp^T \\ \vdots \\ \lp\g^{(i,M)}\rp^T
             \end{bmatrix}
\quad \mbox{ and } \quad
 G\triangleq \begin{bmatrix}
               G^{(2)} \\ G^{(3)}  \\ \vdots \\ G^{(m)} 
             \end{bmatrix}. \label{eq:ex0a14b1}
\end{eqnarray}
In (\ref{eq:ex0a14b0}), $I$ and $\1$ stand for the identity matrix and column vector of all ones, respectively. Their dimensions are specified in the subscripts (or omitted if clear form the context). We also observe the following alterative formulation of (\ref{eq:ex0a14b0})
\begin{eqnarray}
\xi(\delta) \triangleq \lim_{n\rightarrow\infty}\frac{1}{\sqrt{n}} \max_{\x\in \cX(\delta)}\max_{\bar{\y}\in\mS^{mM}}  \bar{\y}^T  \tilde{G}\x, \label{eq:ex0a14b2}
\end{eqnarray}
where
\begin{eqnarray}
  \tilde{G}\triangleq \begin{bmatrix}
               \tilde{G}^{(2)} \\ \tilde{G}^{(3)}  \\ \vdots \\ \tilde{G}^{(m)} 
             \end{bmatrix}
             \quad \mbox{ and } \quad
\tilde{G}^{(i)} = \sqrt{(1-r^2)I_{M\times M} + r^2 \1_M\1_M^T} G^{(i)}.
              \label{eq:ex0a14b3}
\end{eqnarray}
To see that (\ref{eq:ex0a14b2}) preserves all key statistical properties of $\bar{G}$ in (\ref{eq:ex0a14}), we consider the $i$-th block $\bar{G}^{(i)}$ and its analogue $\tilde{G}^{(i)} = \sqrt{(1-r^2)I_{M\times M} + r^2 \1_M\1_M^T} G^{(i)}$. Since both $\bar{G}$ and $G$ have independent columns, we have for the $j$-th column  
\begin{eqnarray}
\mE\, \bar{G}_{:,j}^{(i)} \lp\bar{G}_{:,j}^{(i)}\rp^T & = & (1-r^2)I_{M\times M} + r^2 \1_M\1_M^T 
\nonumber \\
& = & \sqrt{(1-r^2)I_{M\times M} + r^2 \1_M\1_M^T} \sqrt{(1-r^2)I_{M\times M} + r^2 \1_M\1_M^T}
\nonumber \\
& = & 
 \mE\, \sqrt{(1-r^2)I_{M\times M} + r^2 \1_M\1_M^T}  G_{:,j}^{(i)} \lp G_{:,j}^{(i)}\rp^T\sqrt{(1-r^2)I_{M\times M} + r^2 \1_M\1_M^T}  
 \nonumber \\
& = & 
 \mE\,  \tilde{G}_{:,j}^{(i)} \lp \tilde{G}_{:,j}^{(i)}\rp^T.  
 \label{eq:ex0a14b4}
\end{eqnarray}
Analogously to (\ref{eq:ex0a12b0}), we then set
\begin{eqnarray}
  \xi_1(\delta)
  = -(1-2\delta)^2r^2 M  - \xi(\delta)^2, \label{eq:ex0a14b5}
\end{eqnarray}
and for any $M\in\mN$ and $r\in[0,1]$ define the following as the estimate of the critical Hopfield-Hebbian dynamics (\ref{eq:ex0a1}) capacity
 \begin{equation}
\alpha_c(M,r)  \triangleq  \max \{\alpha |\hspace{.08in} \exists \delta, \,\, \delta_u>\delta >\delta_1>0, \lim_{n\rightarrow\infty}
\mP_G\lp \xi_1(\delta_1)\leq \min( \xi_1(0),\xi_1(\delta))\rp\longrightarrow 1\}
\quad \mbox{ with } \quad \alpha  = \lim_{n\rightarrow \infty} \frac{m}{n}, 
  \label{eq:ex0a16}
\end{equation}
where $\delta_u$ is the predetermined pattern neighborhood. Practically speaking, $\alpha_c(M,r)$ relates to the largest number of patterns for which there is a local minima of the dynamics in (\ref{eq:ex0a1}) within (small) distance $\delta_u$.

\subsection{Free energy formulation}
 \label{sec:feeeng}

As is clear from (\ref{eq:ex0a16})), understanding $\xi_1(\delta)$ and $\xi(\delta)$ is critically important for finding $\alpha_c(M,r)$. We therefore focus on (\ref{eq:ex0a14b0}). Analogously to
\cite{StojnicGardGen13}, we recognize that (\ref{eq:ex0a14b0}), can be directly connected to studying statistical physics \emph{free energies}. We sketch the key contours of the correspondence and refer to \cite{StojnicGardGen13} for a more detailed discussion.

From (\ref{eq:ex0a14b0}) we have
\begin{eqnarray}
\xi(\delta) = - \lim_{n\rightarrow\infty}\frac{1}{\sqrt{n}} 
\min_{\x\in \cX(\delta),\z\in\mR^{mM}} 
& &  -\sqrt{\z^T  I_{m\times m}\otimes ( (1-r^2)I_{M\times M} + r^2 \1_M\1_M^T ) \z}
\nonumber \\
\mbox{subject to } & &  G\x =\z ,
 \label{eq:ham0a0}
\end{eqnarray}
which is further equivalent to 
\begin{align}
\xi(\delta) & =  - \lim_{n\rightarrow\infty}\frac{1}{\sqrt{n}} 
\min_{\x\in \cX(\delta),\z\in\mR^{mM}} \max_{\y\in\mR^{mM}}
 \lp -\sqrt{\z^T  I_{m\times m}\otimes ( (1-r^2)I_{M\times M} + r^2 \1_M\1_M^T ) \z}
+\y^T G\x -\y^T\z \rp
\nonumber \\
& =  - \lim_{n\rightarrow\infty}\frac{1}{\sqrt{n}} 
\min_{\x\in \cX(\delta),\z\in\mR^{mM}} \max_{y>0,\y\in y\mS^{mM}}
 \lp -\sqrt{\z^T  I_{m\times m}\otimes ( (1-r^2)I_{M\times M} + r^2 \1_M\1_M^T ) \z}
+\y^T G\x -\y^T\z \rp.\nonumber \\
 \label{eq:ham0a1}
\end{align}
We consider the following Hamiltonian 
\begin{equation}
\cH_{sq}(G)= \y^TG\x,\label{eq:ham1}
\end{equation}
and associated partition function
\begin{equation}
Z_{sq}(\beta,G)=\sum_{\x\in\cX,\z\in\cZ} \lp \sum_{\y\in\cY}e^{\beta\lp \cH_{sq}(G) +f(\y;\z) \rp } \rp^{-1}.  \label{eq:partfun}
\end{equation}
The following is then thermodynamic limit of the average free energy 
\begin{eqnarray}
f_{sq}(\beta) & = & \lim_{n\rightarrow\infty}\frac{\mE_G\log{(Z_{sq}(\beta,G)})}{\beta \sqrt{n}}
=\lim_{n\rightarrow\infty} \frac{\mE_G\log\lp \sum_{\x\in\cX,\z\in\cZ} \lp \sum_{\y\in\cY}e^{\beta \lp \cH_{sq}(G) +f(\y;\z) \rp  } \rp^{-1} \rp}{\beta \sqrt{n}} \nonumber \\
& = &\lim_{n\rightarrow\infty} \frac{\mE_G\log\lp \sum_{\x\in\cX,\z\in\cZ} \lp  \sum_{\y\in\cY}e^{\beta(\y^TG\x+f(\y;\z))  }\rp^{-1} \rp}{\beta \sqrt{n}}.\label{eq:logpartfunsqrt}
\end{eqnarray}
Throughout the paper, if the subscripts next to $\mP$ and $\mE$ are present they denote the underlying randomness (when the randomness is clear from the context, the subscripts are often left unspecified).

For sets specializations $\cX=\cX(\delta)$ and $\cY=y\mS^{mM}$, the ground state energy (obtained in the zero-temperature $T\rightarrow 0$ or  $\beta=\frac{1}{T}\rightarrow\infty$ regime)  
\begin{eqnarray}
f_{sq}(\infty)  & \triangleq  &  \lim_{\beta\rightarrow\infty}f_{sq}(\beta)  = 
\lim_{\beta,n\rightarrow\infty}\frac{\mE_G\log{(Z_{sq}(\beta,G)})}{\beta \sqrt{n}}
\nonumber \\
& = &
 \lim_{n\rightarrow\infty}\frac{\mE_G \max_{\x\in\cX,\z\in\cZ}  - \max_{\y\in\cY} \lp\y^TG\x +f(\y;\z)\rp }{\sqrt{n}}
 \nonumber \\
& = &
- \lim_{n\rightarrow\infty}\frac{\mE_G \min_{\x\in\cX,\z\in\cZ} \max_{\y\in\cY} \lp\y^TG\x +f(\y;\z)\rp }{\sqrt{n}},
  \label{eq:limlogpartfunsqrta0}
\end{eqnarray}
is directly related to (\ref{eq:ham0a1})  (and consequently to (\ref{eq:ex0a14b0})). Moreover, $f_{sq}(\infty)$ is then tightly connected to $\xi_1(\delta)$ as well, which further means that it is critically important for finding $\alpha_c(M,r)$ in  (\ref{eq:ex0a16}). However, determining $f_{sq}(\infty)$ directly is typically not overly trivial. One instead often studies  $f_{sq}(\beta)$ for a general $\beta$  and then specializes to $\beta\rightarrow\infty$. As such a specialization is the ultimate goal, throughout the analysis that follows below we may sometimes neglect details without significant contribution in the $\beta\rightarrow\infty$ regime.

\section{Fitting Hopfield-Hebbian dynamics capacity within RDT}
\label{sec:randlincons}

Observation that the free energy from (\ref{eq:logpartfunsqrt}) is a function of \emph{bilinearly indexed} (bli) random process $\y^TG\x$ allows utilization of the bli related results from \cite{Stojnicsflgscompyx23,Stojnicnflgscompyx23,Stojnicflrdt23}. For a practical implementation, we follow into the footsteps of \cite{Stojnicalgbp25,Stojnicalgsbp26} and start with necessary technical preliminaries. 

Consider sets $\cX\subseteq \mS^n$ and $\cY\subseteq y \mS^{mM}$ (where $\mS^p$ is the unit sphere in $\mR^p$) and function $f_S(\cdot):\mR^{n+m}\rightarrow R$. For $r\in\mN$, let  $\p=[\p_0,\p_1,\dots,\p_{r+1}]$, $\q=[\q_0,\q_1,\dots,\q_{r+1}]$, and $\c=[\c_0,\c_1,\dots,\c_{r+1}]$ be such that
 \begin{eqnarray}\label{eq:hmsfl2}
1=\p_0\geq \p_1\geq \p_2\geq \dots \geq \p_r\geq \p_{r+1} & = & 0 \nonumber \\
1=\q_0\geq \q_1\geq \q_2\geq \dots \geq \q_r\geq \q_{r+1} & = &  0,
 \end{eqnarray}
$\c_0=1$, $\c_{r+1}=0$. For $k\in\{1,2,\dots,r+1\}$, let ${\mathcal U}_k\triangleq [u^{(4,k)},\u^{(2,k)},\h^{(k)}]$ be such that the elements of  $u^{(4,k)}\in\mR$, $\u^{(2,k)}\in\mR^{mM}$, and $\h^{(k)}\in\mR^n$ are independent standard normals. We set
  \begin{eqnarray}\label{eq:fl4}
\bar{\psi}_{S,\infty}(f_{S},\cX,\cY,\p,\q,\c,s)  =
 \mE_{G,{\mathcal U}_{r+1}} \frac{1}{n\c_r} \log
\lp \mE_{{\mathcal U}_{r}} \lp \dots \lp \mE_{{\mathcal U}_3}\lp\lp\mE_{{\mathcal U}_2} \lp \lp Z_{S,\infty}\rp^{\c_2}\rp\rp^{\frac{\c_3}{\c_2}}\rp\rp^{\frac{\c_4}{\c_3}} \dots \rp^{\frac{\c_{r}}{\c_{r-1}}}\rp, \nonumber \\
 \end{eqnarray}
where
\begin{eqnarray}\label{eq:fl5}
Z_{S,\infty} & \triangleq & e^{D_{0,S,\infty}} \nonumber \\
 D_{0,S,\infty} & \triangleq  & \max_{\x\in\cX } s \max_{\y\in\cY }
 \lp \sqrt{n} f_{S}
+  \sqrt{n}  y    \lp\sum_{k=2}^{r+1}c_k\h^{(k)}\rp^T\x
+ \sqrt{n}  \y^T\lp\sum_{k=2}^{r+1}b_k\u^{(2,k)}\rp \rp \nonumber  \\
 b_k & \triangleq & b_k(\p,\q)=\sqrt{\p_{k-1}-\p_k} \nonumber \\
c_k & \triangleq & c_k(\p,\q)=\sqrt{\q_{k-1}-\q_k},
 \end{eqnarray}
and $s^2=1$ (we note that $\bar{\psi}_{S,\infty}(f_{S},\cX,\cY,\p,\q,\c,s) $ form in (\ref{eq:fl4}) is practically useful only for $r\geq 2$. For the notational easiness, we add that when $r=1$, $\bar{\psi}_{S,\infty}(f_{S},\cX,\cY,\p,\q,\c,s) $ is obtained from (\ref{eq:fl4})  for $r=2$ under the $\lim_{\p_2,\q_2,c_2\rightarrow 0}$). The above definitions allow to recall the following fundamental fl-RDT theorem.

\begin{theorem} \cite{Stojnicflrdt23,Stojnicalgbp25,Stojnicalgsbp26}
\label{thm:thmsflrdt1}  Let  $\alpha=\lim_{n\rightarrow\infty} \frac{m}{n}$ remain constant as  $n$ grows. Consider $G\in\mR^{mM \times n}$ with independent standard normal elements and for $y>0$ let $\cX\subseteq \mS^n$ and $\cY\subseteq y\mS^{mM}$ be given (scaled) unit spheres subsets.  Assume that  (\ref{eq:hmsfl2})--({\ref{eq:fl5}}) hold for an $r\in \mN$ (possibly for $r\rightarrow \infty$ as well). Consider the complete stationarized fl-RDT frame from \cite{Stojnicflrdt23,Stojnicalgbp25,Stojnicalgsbp26} and for a given function $f(\y):R^{mM}\rightarrow R$ set
\begin{align}\label{eq:thmsflrdt2eq1a0}
   \psi_{rp} & \triangleq - \max_{\x\in\cX} s \max_{\y\in\cY} \lp \y^TG\x + f(\y) \rp
   \qquad  \mbox{(\bl{\textbf{random primal}})} \nonumber \\
   \psi_{rd}(\p,\q,\c) & \triangleq    \frac{y^2}{2}    \sum_{k=2}^{r+1}\Bigg(\Bigg.
   \p_{k-1}\q_{k-1}
   -\p_{k}\q_{k}
  \Bigg.\Bigg)
\c_k
  - \bar{\psi}_{S,\infty}(f(\y),\cX,\cY,\p,\q,\c,s) \quad \mbox{(\bl{\textbf{fl random dual}})}. \nonumber \\
 \end{align}
Let $\hat{\p_0}\rightarrow 1$, $\hat{\q_0}\rightarrow 1$, and $\hat{\c_0}\rightarrow 1$, $\hat{\p}_{r+1}=\hat{\q}_{r+1}=\hat{\c}_{r+1}=0$ and  let the non-fixed parts of $\hat{\p}$, $\hat{\q}$, and  $\hat{\c}$ be the solutions of the following system
\begin{eqnarray}\label{eq:thmsflrdt2eq2a0}
   \frac{d \psi_{rd}(\p,\q,\c)}{d\p} =  0,\quad
   \frac{d \psi_{rd}(\p,\q,\c)}{d\q} =  0,\quad
   \frac{d \psi_{rd}(\p,\q,\c)}{d\c} =  0.
 \end{eqnarray}
 Then,
\begin{eqnarray}\label{eq:thmsflrdt2eq3a0}
    \lim_{n\rightarrow\infty} \frac{\mE_G  \psi_{rp}}{\sqrt{n}}
  & = &
 \lim_{n,r\rightarrow\infty} \psi_{rd}(\hat{\p},\hat{\q},\hat{\c}) \qquad \mbox{(\bl{\textbf{strong sfl random duality}})},\nonumber \\
 \end{eqnarray}
where $\bar{\psi}_{S,\infty}(\cdot)$ is as in (\ref{eq:fl4})-(\ref{eq:fl5}).
 \end{theorem}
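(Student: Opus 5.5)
The plan is to obtain the statement as a specialization of the fully lifted random duality machinery of \cite{Stojnicflrdt23}, in the form used in \cite{Stojnicalgbp25,Stojnicalgsbp26}. The argument has two halves: a comparison-type lower bound via interpolation, and a matching upper bound from stationarity of the lifted parameters. I would work directly with the bilinearly indexed process $\y^TG\x$, with $\x\in\cX\subseteq\mS^n$, $\y\in\cY\subseteq y\mS^{mM}$ and the deterministic perturbation $f(\y)$.

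First I would build the interpolating object. For $t\in[0,1]$ consider the process
\[
\sqrt{t}\,\y^TG\x+\sqrt{1-t}\Big(y\Big(\sum_{k=2}^{r+1}c_k\h^{(k)}\Big)^T\x+\y^T\sum_{k=2}^{r+1}b_k\u^{(2,k)}\Big)+f(\y).
\]
Next, exponentiate the maximized form as in (\ref{eq:fl5}) at finite inverse temperature $\beta$. Then nest the expectations over ${\mathcal U}_2,\dots,{\mathcal U}_{r+1}$ with powers $\c_k$ exactly as in (\ref{eq:fl4}), and call the resulting function $\psi(t)$.

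The core computation is $\frac{d\psi}{dt}$ via Gaussian integration by parts, carried out level by level. Each level produces averages under tilted Gibbs measures of the overlap differences $(\p_{k-1}-\p_k)(\q_{k-1}-\q_k)$-type terms. These collect into the deterministic correction $\frac{y^2}{2}\sum_k(\p_{k-1}\q_{k-1}-\p_k\q_k)\c_k$, plus a remainder that is a sum of averages of $(\|\x^{(1)}\|\|\x^{(2)}\|... )$ products like $(\x^{(1)T}\x^{(2)}-\q_k)(\y^{(1)T}\y^{(2)}-y^2\p_k)$. In the complete stationarized frame one chooses $\hat\p,\hat\q,\hat\c$ solving (\ref{eq:thmsflrdt2eq2a0}). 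I would then argue, as in \cite{Stojnicflrdt23}, that stationarity forces the expected overlaps at each level to equal $\hat\p_k,\hat\q_k$, so that the remainder's leading part vanishes and $\psi(1)-\psi(0)\to0$. Here $\psi(0)$ is the decoupled quantity equal to $-\psi_{rd}$ plus the correction, while $\psi(1)$ is the random primal. Sending $\beta\to\infty$, using concentration of the max-type quantities (Lipschitz in $G$), and taking $r\to\infty$ then yields (\ref{eq:thmsflrdt2eq3a0}).

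I expect the main obstacle to be controlling the remainder. The sign of the covariance-difference terms is not fixed for general $s$ and sets $\cX,\cY$, so a one-sided Slepian-type inequality alone gives only a bound rather than the equality claimed. The route I would try first is that of \cite{Stojnicsflgscompyx23,Stojnicnflgscompyx23}. Write the remainder as a sum of products of centered overlaps and use that, at the stationary point and as $r\to\infty$, the overlap distributions under the lifted measures become concentrated on the $\hat\p,\hat\q$ grid. Each product is then $o(1)$. Throughout I would invoke the stated results of those references rather than re-derive the concentration.
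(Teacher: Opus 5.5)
This is essentially the paper's route. The paper gives no proof of Theorem~\ref{thm:thmsflrdt1}: it recalls it from \cite{Stojnicflrdt23,Stojnicalgbp25,Stojnicalgsbp26}, the only change being that $G$ has $mM$ rows (with $mM/n\rightarrow\alpha M$ still constant), so your argument is complete exactly to the extent that you invoke those results, as you say you will. Do not rely on your interpolation sketch on its own, though. Stationarity of $\psi_{rd}$ only gives overlap self-consistency under the decoupled ($t=0$) measures, and matching means does not make the expected products of centered overlaps vanish for $t\in(0,1)$. Controlling that remainder is precisely what the stationarization along the interpolating path in \cite{Stojnicsflgscompyx23,Stojnicflrdt23} is for.
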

 
The readers familiar with Parisi's RSB ansatz and replica theory in general \cite{Parisi80,Par79,Par80,Par83} may note the presence of added Gaussian sequences nesting averaging in the above theorem. While replicated systems utilized to create  \emph{index decoupling}  are not present in (\ref{eq:hmsfl2})--({\ref{eq:thmsflrdt2eq3a0}), a bit of resembling decoupling flair may be sensed through the presence of bilinearly coupled $f_{sq}(\cdot)$ and fully decoupled $\bar{\psi}_{S,\infty}(\cdot)$. However, the mathematical machinery underlying the above theorem leans on nested random processes studies and in no way relates to any form of replication (for more on core RDT intricacies as well as on the associated algorithmic and machine learning implications, see \cite{Stojnicsgenprovrsb23,Stojnicnflgscompyx23,StojnicRegRndDlt10,StojnicGardGen13,Stojnicalgbp25,Stojnicclupsk25,Stojnicalgsbp26,StojnicNN27,Stojnictcmspnncaprdt23,Stojnictcmspnncapliftedrdt23}).

The key benefit that Theorem \ref{thm:thmsflrdt1} offers is the \emph{random dual}'s simplified structure. However, its ultimate practical relevance relies on one's ability to conduct underlying numerical evaluations. Two types of obstacles usually appear in this regard: (\textbf{\emph{i}}) sets $\cX$ and $\cY$  may lack component-wise structure which renders decoupling of $\x$ and/or $\y$ as not necessarily trivial; and  (\textbf{\emph{ii}}) since \emph{a priori}  $r$  can be arbitrarily large, the associated $r$ convergence of the program outlined om Theorem \ref{thm:thmsflrdt1}  may not be sufficiently fast to make higher lifting levels' numerical evaluations feasible. Drawing analogy with Hopfield-Hebbian dynamics of uncorrelated patterns, the impact of the latter is unlikely to be of relevance. However, the former does pose a challenge and we address it below.

\subsection{Functional and set specializations}
\label{sec:spec}

Theorem \ref{thm:thmsflrdt1} is a generic result.  To make it practically operational in the context of our interest here, we introduce in the following corollaries a couple of specializations.

\begin{corollary}
\label{cor:corsflrdt1}  Assume the setup of Theorem \ref{thm:thmsflrdt1}. For $M\in\mN$ and $r\in[0,1]$ set 
\begin{equation}\label{eq:corthmsflrdt2eq1a0b0}
f(\y)= -\sqrt{\z^T  I_{m\times m}\otimes ( (1-r^2)I_{M\times M} + r^2 \1_M\1_M^T ) \z
  -\y^T\z}, 
  \end{equation}
  and \begin{align}\label{eq:corthmsflrdt2eq1a0}
   \psi_{rp} & \triangleq - \max_{\x\in\cX,\z\in\mR^{mM}} s \max_{\y\in\cY} \lp \y^TG\x + f(\y) \rp
   \qquad  \mbox{(\bl{\textbf{random primal}})} \nonumber \\
   \psi_{rd}(\p,\q,\c) & \triangleq    \frac{y^2}{2}    \sum_{k=2}^{r+1}\Bigg(\Bigg.
   \p_{k-1}\q_{k-1}
   -\p_{k}\q_{k}
  \Bigg.\Bigg)
\c_k
  - \bar{\psi}_{S,\infty}(f(\y),\cX,\cY,\p,\q,\c,s) \quad \mbox{(\bl{\textbf{fl random dual}})}. \nonumber \\
 \end{align}
Let $\hat{\p_0}\rightarrow 1$, $\hat{\q_0}\rightarrow 1$, and $\hat{\c_0}\rightarrow 1$, $\hat{\p}_{r+1}=\hat{\q}_{r+1}=\hat{\c}_{r+1}=0$ and  let the non-fixed parts of $\hat{\p}$, $\hat{\q}$, and  $\hat{\c}$ be the solutions of the following system
\begin{eqnarray}\label{eq:corthmsflrdt2eq2a0}
   \frac{d \psi_{rd}(\p,\q,\c)}{d\p} =  0,\quad
   \frac{d \psi_{rd}(\p,\q,\c)}{d\q} =  0,\quad
   \frac{d \psi_{rd}(\p,\q,\c)}{d\c} =  0.
 \end{eqnarray}
 Then,
\begin{eqnarray}\label{eq:corthmsflrdt2eq3a0}
    \lim_{n\rightarrow\infty} \frac{\mE_G  \psi_{rp}}{\sqrt{n}}
  & = &
 \lim_{n,r\rightarrow\infty} \psi_{rd}(\hat{\p},\hat{\q},\hat{\c}) \qquad \mbox{(\bl{\textbf{strong sfl random duality}})},
 \end{eqnarray}
where $\bar{\psi}_{S,\infty}(\cdot)$ is as in (\ref{eq:fl4})-(\ref{eq:fl5}) and
\begin{eqnarray}\label{eq:corfl5}
  D_{0,S,\infty} & = & \max_{\x\in\cX , \z\in\mR^{mM}} s \max_{\y\in\cY }
 \lp \sqrt{n} f(\y)
+\sqrt{n}    y  \lp\sum_{k=2}^{r+1}c_k\h^{(k)}\rp^T\x
+ \sqrt{n}  \y^T\lp\sum_{k=2}^{r+1}b_k\u^{(2,k)}\rp \rp  .
 \end{eqnarray}
 \end{corollary}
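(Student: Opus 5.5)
The plan is to obtain Corollary \ref{cor:corsflrdt1} as a direct specialization of Theorem \ref{thm:thmsflrdt1}, with the extra variable $\z$ absorbed into the function $f$. The first step is to rewrite the random primal in (\ref{eq:corthmsflrdt2eq1a0}) as
\[
\psi_{rp}= -\max_{\z\in\mR^{mM}}\max_{\x\in\cX} s\max_{\y\in\cY}\lp \y^TG\x+f(\y;\z)\rp .
\]
Here $f(\y;\z)=-\sqrt{\z^T (I_{m\times m}\otimes ((1-r^2)I_{M\times M}+r^2\1_M\1_M^T))\z}-\y^T\z$ is the intended reading of (\ref{eq:corthmsflrdt2eq1a0b0}), matching (\ref{eq:ham0a1}). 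For every fixed $\z$, the inner problem has exactly the form of the random primal in Theorem \ref{thm:thmsflrdt1}. The sets are unchanged: $\cX\subseteq\mS^n$ and $\cY\subseteq y\mS^{mM}$. The function $f(\cdot;\z)$ is deterministic, i.e. independent of $G$. The Gaussian bilinear process $\y^TG\x$ is untouched.

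Next, I would go back to the proof of Theorem \ref{thm:thmsflrdt1}, which is the interpolation/nested Gaussian comparison argument of \cite{Stojnicflrdt23,Stojnicalgbp25,Stojnicalgsbp26}. That argument acts only on the bilinear term. It interpolates $\y^TG\x$ against the decoupled sequences $y(\sum_k c_k\h^{(k)})^T\x+\y^T(\sum_k b_k\u^{(2,k)})$. The deterministic term $f$ and the outer optimization domain enter only as part of the ``maximization over an index set'' inside the partition functions. Consequently, the joint variable $(\x,\z)$ can be treated as a single index ranging over $\cX\times\mR^{mM}$. The covariance computations in the interpolation derivative then depend only on $\|\x\|_2$, $\|\y\|_2$ and the overlaps of $\x$ and $\y$, never on $\z$. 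Every step therefore goes through verbatim. The result is that $D_{0,S,\infty}$ acquires the additional $\max_{\z}$, exactly as in (\ref{eq:corfl5}). The fl random dual keeps the same $\frac{y^2}{2}\sum_k(\p_{k-1}\q_{k-1}-\p_k\q_k)\c_k$ term, and the stationarity system (\ref{eq:corthmsflrdt2eq2a0}) carries over unchanged.

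The main obstacle is that $\z$ ranges over the noncompact set $\mR^{mM}$, while the theorem (and the $\beta\to\infty$ free energy limits behind it) implicitly uses bounded index sets. To handle this, I would first restrict $\z$ to a ball of radius $R\sqrt{n}$ and apply the argument there. The optimal $\z$ in (\ref{eq:ham0a0}) equals $G\x$. Since $\|G\x\|_2\le \|G\|_2\le C\sqrt{n}$ with overwhelming probability, $R$ can be chosen as a large constant without changing $\psi_{rp}$. The same bound holds on the decoupled side, because there the optimizing $\z$ is controlled by $\|\y\|_2=y$ and the Gaussian vectors $\u^{(2,k)}$. Letting $R\to\infty$ after $n\to\infty$, using monotone convergence and standard concentration of the maximum of Gaussian processes, then removes the restriction. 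This gives (\ref{eq:corthmsflrdt2eq3a0}).
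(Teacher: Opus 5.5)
Your proposal is correct and is essentially the paper's argument. The paper's proof is the one-line remark that Corollary~\ref{cor:corsflrdt1} is Theorem~\ref{thm:thmsflrdt1} with $f(\y;\z)$ and an extra maximization over $\z$ carried along, which is exactly your joint-index $(\x,\z)$ reasoning; you also rightly read the $-\y^T\z$ in (\ref{eq:corthmsflrdt2eq1a0b0}) as lying outside the square root, as in (\ref{eq:ham0a1}).

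Your truncation step goes beyond what the paper writes, and one claim in it needs correcting. At fixed $y$ and $s=-1$, the minimizing $\z$ equals $G\x$ only when $y\geq\sqrt{1+(M-1)r^2}$. For smaller $y$ both sides are $-\infty$, and for $s=1$ the optimizer is not $G\x$ at all. So the claim that a fixed large $R$ leaves $\psi_{rp}$ unchanged does not cover all cases; your monotone $R\rightarrow\infty$ limit is what does.
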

\begin{proof}
Follows automatically from Theorem \ref{thm:thmsflrdt1} after choosing $f(\y)$ as $f(\y;\z)$ in (\ref{eq:corthmsflrdt2eq1a0b0}) and trivially adjusting for an additional $\z$ maximization.
 \end{proof}

In addition to the above functional specialization, we also consider set specialization 
$\cX=\cX(\delta)$. To that end, we first define
  \begin{eqnarray}\label{eq:prac2}
\varphi(D,\c) & = &
 \mE_{G,{\mathcal U}_{r+1}} \frac{1}{\c_r} \log
\lp \mE_{{\mathcal U}_{r}} \lp \dots \lp \mE_{{\mathcal U}_3}\lp\lp\mE_{{\mathcal U}_2} \lp
\lp    e^{D}   \rp^{\c_2}\rp\rp^{\frac{\c_3}{\c_2}}\rp\rp^{\frac{\c_4}{\c_3}} \dots \rp^{\frac{\c_{r}}{\c_{r-1}}}\rp,  
  \end{eqnarray}
\begin{eqnarray}\label{eq:prac4}
D^{(per)}_i(c_k) & = & -2\min \lp\sum_{k=2}^{r+1}c_k\h_i^{(k)}-\nu,0\rp +\sum_{k=2}^{r+1}c_k\h_i^{(k)},
\nonumber \\
   D_{\nu}^{(tot)} & =  &  
     -  \min_{\z\in\mR^n} 
\Bigg (\Bigg . -\sqrt{n}\sqrt{\z^T I_{m\times m}\otimes ((1-r^2)I_{M\times M} +r^2\1_M\1_M^T)\z}
+ \sqrt{n} y \Bigg (\Bigg .
 \left \| \sum_{k=2}^{r+1}b_k\u^{(2,k)}  - \z \right \|_2  
\nonumber \\
& & 
- \lp  -2\nu\delta n  + \sum_{i=1}^nD^{(per)}_i(c_k) \rp \Bigg . \Bigg )
\Bigg . \Bigg )
 \end{eqnarray}
 and
\begin{eqnarray}
  \frac{y^2}{2}    \sum_{k=2}^{r+1}\Bigg(\Bigg.
    \p_{k-1} \q_{k-1}
   -\p_{k} \q_{k}
  \Bigg.\Bigg)
\hat{\c}_k
  - \frac{1}{n}\varphi(D_{\nu}^{(tot)},\c)  
 \triangleq    \bar{\psi}_{rd}(\p,\q,\c,\nu,y) .
  \label{eq:negprac13}
\end{eqnarray}

\begin{corollary}
\label{cor:corsflrdt2} 
  Assume the setup of Corollary \ref{cor:corsflrdt1} with $s=-1$ and $\cX=\cX(\delta)$, where for $\delta>0$, $\cX(\delta) $ is as in (\ref{eq:ex0a15}). For $\varphi(\cdot)$ as in (\ref{eq:prac2}) and $\bar{\psi}(\cdot)$ as in (\ref{eq:negprac13}), let the ``fixed'' parts of $\hat{\p}$, $\hat{\q}$, and $\hat{\c}$ satisfy $\hat{\p}_1\rightarrow 1$, $\hat{\q}_1\rightarrow 1$, $\hat{\c}_1\rightarrow 1$, $\hat{\p}_{r+1}=\hat{\q}_{r+1}=\hat{\c}_{r+1}=0$ and let the ``non-fixed'' parts of $\hat{\p}_k$, $\hat{\q}_k$, and $\hat{\c}_k$ ($k\in\{2,3,\dots,r\}$) satisfy
  \begin{eqnarray}\label{eq:negthmprac1eq1}
   \frac{d \bar{\psi}_{rd}(\p,\q,\c,\nu,y)}{d\p} =
   \frac{d \bar{\psi}_{rd}(\p,\q,\c,\nu,y)}{d\q} =
   \frac{d \bar{\psi}_{rd}(\p,\q,\c,\nu,y)}{d\c} =  
   \frac{d \bar{\psi}_{rd}(\p,\q,\c,\nu,y)}{d\nu} = 
   \frac{d \bar{\psi}_{rd}(\p,\q,\c,\nu,y)}{dy} =0.\nonumber \\
 \end{eqnarray}
One then has
 \begin{eqnarray}
 \lim_{n\rightarrow\infty} \psi_{rd}(\hat{\p},\hat{\q},\hat{\c})
     & = & \lim_{n\rightarrow \infty} \frac{y^2}{2}    \sum_{k=2}^{r+1}\Bigg(\Bigg.
   \hat{\p}_{k-1}\hat{\q}_{k-1}
   -\hat{\p}_{k}\hat{\q}_{k}
  \Bigg.\Bigg)
\hat{\c}_k
  - \frac{1}{n}\varphi(D^{(tot)},\c)  
\nonumber \\
& = &  \lim_{n\rightarrow\infty}  \min_{\nu} \max_{y>0} \bar{\psi}_{rd}(\hat{\p},\hat{\q},\hat{\c},\nu,y) , 
   \label{eq:negthmprac1eq2}
\end{eqnarray}
with $c_k(\hat{\p},\hat{\q})   =  \sqrt{\hat{\q}_{k-1}-\hat{\q}_k}$ and $b_k(\hat{\p},\hat{\q})   =  \sqrt{\hat{\p}_{k-1}-\hat{\p}_k}$.
\end{corollary}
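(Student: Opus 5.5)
The plan is to start from Corollary \ref{cor:corsflrdt1} with $s=-1$ and work only on $D_{0,S,\infty}$ in (\ref{eq:corfl5}). I would separate the $\y$ and $\x$ parts, decouple each of them, and then pass the resulting Lagrangian/Lipschitz structure through the nested expectations in $\varphi$ from (\ref{eq:prac2}).

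The first step handles the inner optimization over $\y\in\cY=y\mS^{mM}$ for fixed $\z$ and $\x$. With $s=-1$ and $f(\y)=-\sqrt{\z^T(\cdot)\z}-\y^T\z$, the $\y$-dependent part is $\y^T(\sum_k b_k\u^{(2,k)}-\z)$. Its maximum over the sphere of radius $y$ is $y\|\sum_k b_k\u^{(2,k)}-\z\|_2$. This gives the norm term in $D^{(tot)}_\nu$. The second step keeps $y$ as a free scalar: I would pull $\max_{y>0}$ outside and let the stationarity condition in $y$ fix it. This is the usual RDT step in which the radius of $\cY$ becomes an optimization variable, matching the $\max_{y>0}$ in (\ref{eq:negthmprac1eq2}).

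The third step deals with the $\x$ part, $\max_{\x\in\cX(\delta)} y(\sum_k c_k\h^{(k)})^T\x$. Since $\bg^{(1)}=\1/\sqrt{n}$, the constraint $\|\x-\bg^{(1)}\|_2^2=4\delta$ says exactly $\delta n$ coordinates equal $-1/\sqrt{n}$. I would dualize this single linear constraint with a multiplier $\nu$. The problem then decouples coordinatewise: each coordinate contributes $\max(\bar h_i,-\bar h_i+2\nu)$ with $\bar h_i=\sum_k c_k\h_i^{(k)}$. This equals $-2\min(\bar h_i-\nu,0)+\bar h_i$ up to the $\nu$ offset, and together with the constraint term $-2\nu\delta n$ it reproduces $D^{(per)}_i$ and the bracket in $D^{(tot)}_\nu$ (after the $\sqrt{n}$ scaling). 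The dual is an LP over a permutation-invariant cube slice, so there is no duality gap for each fixed Gaussian realization. Hence $\min_\nu$ is exact pointwise, which I would argue via the combinatorial structure: sorting $\bar h_i$ and flipping the $\delta n$ smallest.

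The main obstacle is step four. It requires interchanging $\min_\nu\max_y$ with the nested $\log\mE(\cdot)^{\c_k/\c_{k-1}}$ operators in $\varphi$, and the first-term contributions $\frac{y^2}{2}\sum(\p_{k-1}\q_{k-1}-\p_k\q_k)\c_k$ must then be carried along. I would argue as follows. After normalization by $n$, $D^{(tot)}_\nu$ is a Lipschitz function of the Gaussians with constants $O(\sqrt n)$, so its fluctuations are $O(\sqrt n)$ against a mean of order $n$. The nested expectations then concentrate, so the scalar optimizations over $\nu$ and $y$ commute with them in the $n\to\infty$ limit. The same reasoning as in \cite{Stojnicalgbp25,Stojnicalgsbp26} for the analogous sign-perceptron specializations applies. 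Finally, imposing the full stationarity system (\ref{eq:negthmprac1eq1}) in $\p,\q,\c,\nu,y$ identifies the saddle point. Combined with Corollary \ref{cor:corsflrdt1}, this yields (\ref{eq:negthmprac1eq2}).
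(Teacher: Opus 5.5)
Your steps one to three reproduce the paper's Appendix argument. That argument splits $D_{0,S,\infty}$ into $D^{(per)}(-1)+D^{(sph)}(-1)$ and maximizes over the sphere to get $y\|\sum_k b_k\u^{(2,k)}-\z\|_2$. It then uses a single-multiplier Lagrangian for the Hamming-distance constraint, after the harmless flip $\h^{(k)}\to-\h^{(k)}$ forced by $s=-1$, which decouples into $-2\nu\delta n+\sum_i D_i^{(per)}$. Note that $\max(\bar h_i,-\bar h_i+2\nu)=\bar h_i-2\min(\bar h_i-\nu,0)$ holds exactly, so no offset is involved. Your zero-duality-gap justification is more careful than the paper's bare appeal to ``strong Lagrangian duality''. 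It rests on integrality of the relaxation over $\{t\in[0,1]^n:\sum_i t_i=\delta n\}$, equivalently on sorting and flipping the $\delta n$ smallest entries. Over the discrete set $\{0,\frac{2}{\sqrt{n}}\}^n$ the paper's claim needs exactly this observation.

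The gap is step four. You claim that $O(\sqrt{n})$-Lipschitz fluctuations make the nested operators in $\varphi$ ``concentrate'', so that $\min_\nu$ and $\max_y$ commute with them. This is false whenever some $\c_k$ is of order one. Gaussian concentration only gives $\frac{1}{n\c_2}\log\mE e^{\c_2 D}\le\frac{1}{n}\mE D+O(\c_2)$, and the $O(\c_2)$ term does not vanish as $n\to\infty$. Already $D=\sum_{i=1}^n\h_i^{(2)}$ has Lipschitz constant $\sqrt{n}$ and mean zero, yet $\frac{1}{n\c_2}\log\mE e^{\c_2 D}=\frac{\c_2}{2}$. Deviations of size $tn$ have probability $e^{-\Theta(n)}$, which is exactly the scale of the exponential weight. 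If your reasoning were right, the $\c$-dependence of $\varphi$, which is the whole content of lifting, would disappear. The argument is valid only in the first-level limit $\hat{\c}_2\to0$ of Section \ref{sec:firstlev}.

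For general $r$ the $\nu$-exchange needs a different mechanism. For fixed $\nu$, $D^{(per)}$ is a sum of i.i.d.\ coordinate terms convex in $\nu$, so $\frac{1}{n}\varphi$ factorizes into a single-coordinate quantity. One inequality then follows from monotonicity of $\varphi$ in $D$. The other requires a genuine minimax argument, e.g.\ a Donsker--Varadhan type variational representation of the nested log-moment-generating layers, combined with convexity in $\nu$ and Sion's theorem. For $y$, which is just the radius of $\cY$, it is cleaner to apply Corollary \ref{cor:corsflrdt1} at fixed $y$ and optimize over $y$ at the level of the primal, where concentration is legitimate.

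The paper itself offers no argument for these exchanges; it simply compares (\ref{eq:prac7a0}) with (\ref{eq:prac4}). Your decomposition therefore covers everything the paper actually proves, but the justification you add for the remaining step would not hold.
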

\begin{proof}
See Appendix \ref{sec:app0}.
\end{proof}

\subsection{Concrete implementation -- first level of lifting}
\label{sec:firstlev}

For the first level, we have $r=1$ and $\hat{\p}_1\rightarrow 1$ and $\hat{\q}_1\rightarrow 1$ which, together with $\hat{\p}_{r+1}=\hat{\p}_{2}=\hat{\q}_{r+1}=\hat{\q}_{2}=0$, and $\hat{\c}_{2}\rightarrow 0$, gives
\begin{align}\label{eq:negprac19}
    \bar{\psi}_{rd}^{(1)}(\hat{\p},\hat{\q},\hat{\c},\nu,y)   & =  - \frac{y^2}{2}
\hat{\c}_2
- 2y\nu\delta + \frac{1}{\hat{\c}_2}\log\lp \mE_{{\mathcal U}_2} e^{\hat{\c}_2 y  \lp  -2\min  \lp \sqrt{1-0}\h_1^{(2)}-\nu,0 \rp +\sqrt{1-0}\h_1^{(2)} \rp}\rp
\nonumber \\
& \quad  + \frac{1}{\hat{n\c}_2}\log\lp \mE_{{\mathcal U}_2} e^{\hat{\c}_2 D_{\nu}^{(tot)}  } \rp \nonumber \\
& \rightarrow
- 2y\nu\delta + \frac{1}{\hat{\c}_2}\log\lp 1+ \mE_{{\mathcal U}_2} \hat{\c}_2 y \lp  -2\min  \lp \sqrt{1-0}\h_1^{(2)}-\nu,0 \rp +\sqrt{1-0}\h_1^{(2)} \rp \rp
 \nonumber \\
&  \quad  
+ \frac{1}{n\hat{\c}_2}\log\lp 1 + \mE_{{\mathcal U}_2} \hat{\c}_2D_{\nu}^{(tot)}\rp \nonumber \\
 & \rightarrow
- 2y\nu\delta + y\mE_{{\mathcal U}_2}  \lp  -2\min  \lp \h_1^{(2)}-\nu,0 \rp +\h_1^{(2)} \rp
+ \frac{1}{n}\mE_{{\mathcal U}_2} D_{\nu}^{(tot)}
 \nonumber \\
 & \rightarrow
- 2y\nu\delta + y\mE_{{\mathcal U}_2}  \lp  -2\min  \lp \h_1^{(2)}-\nu,0 \rp +\h_1^{(2)} \rp
\nonumber \\
& \quad
 - \frac{1}{\sqrt{n}} \mE_{{\mathcal U}_2}  \min_{\z\in\mR^n} 
\Bigg (\Bigg . -\sqrt{\z^T I_{m\times m}\otimes ((1-r^2)I_{M\times M} +r^2\1_M\1_M^T)\z}
+   y 
 \left \| \sum_{k=2}^{2}b_2\u^{(2,2)}  - \z \right \|_2  \Bigg .\Bigg )
 \nonumber \\
 & \rightarrow
  -  \mE_{{\mathcal U}_2} \min_{\z\in\mR^n} 
\Bigg (\Bigg . -\frac{1}{\sqrt{n}}\sqrt{\z^T I_{m\times m}\otimes ((1-r^2)I_{M\times M} +r^2\1_M\1_M^T)\z}
\nonumber \\
& \quad +   y \lp 
 \frac{1}{\sqrt{n}}\left \| \sqrt{1-0}\u^{(2,2)}  - \z \right \|_2 -f_1(\nu) \rp \Bigg .\Bigg )
,
  \end{align}
 where
\begin{eqnarray}\label{eq:negprac19a1}
f_1(\nu) \triangleq \mE_{{\mathcal U}_2}  \lp  2\max  \lp -\h_1^{(2)}+\nu,0 \rp +\h_1^{(2)} \rp - 2\nu\delta
=\mE_{{\mathcal U}_2}  \lp  2\max  \lp -\h_1^{(2)}+\nu,0 \rp \rp - 2\nu\delta .
  \end{eqnarray}
After solving the integrals one obtains
\begin{eqnarray}\label{eq:negprac19a2}
f_1(\nu) & = &   2\lp \frac{1}{\sqrt{2\pi}} e^{-\nu^2/2}+\frac{\nu}{2}\erfc\lp - \frac{\nu}{\sqrt{2}}\rp\rp-2\nu\delta.
  \end{eqnarray}
Taking the $\nu$ derivative gives 
\begin{eqnarray}\label{eq:negprac19a3}
\frac{df_1(\nu)}{d\nu}
& = & 2\lp \frac{1}{2}\erfc\lp -\frac{\nu}{\sqrt{2}}\rp \rp-2\delta  = \erfc \lp -\frac{\nu}{\sqrt{2}}\rp -2\delta
 =1-2\delta - \erf \lp -\frac{\nu}{\sqrt{2}}\rp .
  \end{eqnarray}
Equalling the above derivative to zero, one also finds
$\hat{\nu}=-\sqrt{2} \erfinv \lp 1-2\delta\rp$, which then gives
\begin{eqnarray}\label{eq:negprac19a4}
f_1(\hat{\nu})
& = & 2\lp \frac{1}{\sqrt{2\pi}} e^{-\frac{\hat{\nu}^2}{2}}+\frac{\hat{\nu}}{2}\erfc\lp - \frac{\hat{\nu}}{\sqrt{2}}\rp\rp-2\hat{\nu}\delta
=
 \frac{2}{\sqrt{2\pi}} e^{-\frac{\hat{\nu}^2}{2}}+\hat{\nu}\erfc\lp - \frac{\hat{\nu}}{\sqrt{2}}\rp - 2\hat{\nu}\delta \nonumber \\
& = &
 \frac{2}{\sqrt{2\pi}} e^{-\frac{\hat{\nu}^2}{2}}+\hat{\nu} -\hat{\nu}\erf\lp - \frac{\hat{\nu}}{\sqrt{2}}\rp - 2\hat{\nu}\delta
 =
  \frac{2}{\sqrt{2\pi}} e^{-\frac{\hat{\nu}^2}{2}}+\hat{\nu} -\hat{\nu}\erf\lp \erfinv \lp 1-2\delta\rp\rp - 2\hat{\nu}\delta \nonumber \\
 & = &
  \frac{2}{\sqrt{2\pi}} e^{-\frac{\hat{\nu}^2}{2}}+\hat{\nu} -\hat{\nu} \lp 1-2\delta \rp - 2\hat{\nu}\delta
  =  \frac{2}{\sqrt{2\pi}} e^{-\frac{\hat{\nu}^2}{2}}.
  \end{eqnarray}
From (\ref{eq:limlogpartfunsqrta0}), (\ref{eq:corthmsflrdt2eq3a0}), (\ref{eq:corthmsflrdt2eq1a0}),   
   and (\ref{eq:negthmprac1eq2}), for $\cZ=\mR^{mM}$ and $\cY= y\mS^{mM}$, one  first has 
\begin{eqnarray}
-f_{sq}(\infty)   
& = &
 \lim_{n\rightarrow\infty}\frac{\mE_G \min_{\x\in\cX,\z\in\cZ} \max_{\y\in\cY} \lp\y^TG\x +f(\y;\z)\rp }{\sqrt{n}},
\nonumber \\
& = &
 \lim_{n\rightarrow\infty}\frac{\mE_G \min_{\x\in\cX(\delta),\z\in\mR^{mM}} \max_{y>0,\y\in y\mS^{mM}} \lp\y^TG\x +f(\y;\z)\rp }{\sqrt{n}}
\nonumber \\
& = &
 \lim_{n\rightarrow\infty}\frac{\mE_G  \psi_{rp} }{\sqrt{n}} ,
  \label{eq:keyeq1}
\end{eqnarray}
and then as a first lifting level estimate
\begin{eqnarray}
-f_{sq}^{(1)}(\infty)   
& = &   \lim_{n\rightarrow\infty} \bar{\psi}_{rd}^{(1)}(\hat{\p},\hat{\q},\hat{\c},\nu,\delta)  
  \nonumber \\
  & \rightarrow &
   \lim_{n\rightarrow\infty}
  -  \mE_{{\mathcal U}_2}  \min_{\z\in\mR^{mM},\nu}  \max_{y>0}  
\Bigg (\Bigg . -\frac{1}{\sqrt{n}}\sqrt{\z^T I_{m\times m}\otimes ((1-r^2)I_{M\times M} +r^2\1_M\1_M^T)\z}
\nonumber \\
& & +   y \lp 
 \frac{1}{\sqrt{n}}\left \| \sum_{k=2}^{2}b_k\u^{(2,k)}  - \z \right \|_2 -f_1(\nu) \rp \Bigg .\Bigg )
  \nonumber \\
  & \rightarrow &
 \lim_{n\rightarrow\infty}  -  \mE_{{\mathcal U}_2}  \min_{\z\in\mR^{mM}}  \max_{y>0}  
\Bigg (\Bigg . -\frac{1}{\sqrt{n}}\sqrt{\z^T I_{m\times m}\otimes ((1-r^2)I_{M\times M} +r^2\1_M\1_M^T)\z}
\nonumber \\
& & +   y \lp 
 \frac{1}{\sqrt{n}}\left \| \sqrt{1-0}\u^{(2,2)}  - \z \right \|_2 -f_1(\hat{\nu}) \rp \Bigg .\Bigg )
 \nonumber \\
  & \rightarrow &
   \lim_{n\rightarrow\infty}
  -  \mE_{{\mathcal U}_2}  \min_{\z\in\mR^{mM}}  \max_{y>0}  
\Bigg (\Bigg . -\frac{1}{\sqrt{n}}\sqrt{\z^T I_{m\times m}\otimes ((1-r^2)I_{M\times M} +r^2\1_M\1_M^T)\z}
\nonumber \\
& & +   y \lp 
 \frac{1}{\sqrt{n}}\left \| \u^{(2,2)}  - \z \right \|_2 
 - \frac{2}{\sqrt{2\pi}} e^{-\frac{\hat{\nu}^2}{2}} \rp \Bigg .\Bigg ).
  \label{eq:keyeq2}
\end{eqnarray}
The above can alternatively be viewed as the following optimization
\begin{eqnarray}
-f_{sq}^{(1)}(\infty)   
    \rightarrow 
   \lim_{n\rightarrow\infty}
  - \mE_{{\mathcal U}_2}  \min_{\z\in\mR^{mM}}  & &    
  -\frac{1}{\sqrt{n}}\sqrt{\z^T I_{m\times m}\otimes ((1-r^2)I_{M\times M} +r^2\1_M\1_M^T)\z}
\nonumber \\
\mbox{subject to } & &     
 \frac{1}{\sqrt{n}}\left \| \u^{(2,2)}  - \z \right \|_2 
 \leq  \frac{2}{\sqrt{2\pi}} e^{-\frac{\hat{\nu}^2}{2}} = f_1(\hat{\nu}).
  \label{eq:keyeq3}
\end{eqnarray}
 From (\ref{eq:keyeq3}) concentrations allow to write
\begin{eqnarray}
 \lp -f_{sq}^{(1)}(\infty) \rp^2  
    \rightarrow 
   \lim_{n\rightarrow\infty}
  - \mE_{{\mathcal U}_2}  \min_{\z\in\mR^{mM}} & &     
  -\frac{1}{n} \z^T I_{m\times m}\otimes ((1-r^2)I_{M\times M} +r^2\1_M\1_M^T)\z
\nonumber \\
\mbox{subject to } & &     
 \frac{1}{n}\left \| \u^{(2,2)}  - \z \right \|_2^2 
 \leq   (f_1(\hat{\nu}))^2.
  \label{eq:keyeq4}
\end{eqnarray}
We then write the Lagrangian
\begin{eqnarray}
    \cL(\z,\lambda) = 
  -\frac{1}{n} \z^T I_{m\times m}\otimes ((1-r^2)I_{M\times M} +r^2\1_M\1_M^T)\z
+ \lambda\lp  \frac{1}{n}\left \| \u^{(2,2)}  - \z \right \|_2^2 
 -   (f_1(\hat{\nu}))^2 \rp.
  \label{eq:keyeq6}
\end{eqnarray}
and obtain
\begin{eqnarray}
 \lp -f_{sq}^{(1)}(\infty) \rp^2  
   & \rightarrow &
     \lim_{n\rightarrow\infty}
 - \mE_{{\mathcal U}_2}  \min_{\z\in\mR^{mM}} \max_{\lambda\geq 0} \cL(\z,\lambda) 
  =
    \lim_{n\rightarrow\infty}
   - \mE_{{\mathcal U}_2}  \max_{\lambda\geq 0}  \min_{\z\in\mR^{mM}}\cL(\z,\lambda)  ,
  \label{eq:keyeq5}
\end{eqnarray}
with equality enabled by the strong duality.  Additional algebraic transformations give
\begin{eqnarray}
    \cL(\z,\lambda) & = &  
  -\frac{1}{n} \z^T I_{m\times m}\otimes ((1-r^2)I_{M\times M} +r^2\1_M\1_M^T)\z
+ \frac{1}{n}  \lambda  \| \u^{(2,2)}\|_2^2  + \frac{1}{n} \lambda \|\z\|_2^2 
\nonumber \\
& & -2\frac{1}{n} \lambda \lp \u^{(2,2)} \rp^T\z 
 - \lambda  (f_1(\hat{\nu}))^2 .
  \label{eq:keyeq7}
\end{eqnarray}
 After setting 
\begin{eqnarray}
\z^{(i)}= \z_{(i-1)M+1:iM} \quad \mbox{ and } \quad  \u^{(2,2;i)}= \u_{(i-1)M+1:iM}^{(2,2)} ,
  \label{eq:keyeq8}
\end{eqnarray}
 we rewrite (\ref{eq:keyeq7}) as
\begin{eqnarray}
    \cL(\z,\lambda) & = &  
  -\frac{1}{n} \sum_{i=1}^{m} \lp \z^{(i)}\rp^T ((1-r^2)I_{M\times M} +r^2\1_M\1_M^T)\z^{(i)}
+ \frac{1}{n}  \sum_{i=1}^{m} \lambda  \| \u^{(2,2;i)}\|_2^2  + \frac{1}{n}  \sum_{i=1}^{m}\lambda \|\z^{(i)}\|_2^2 
\nonumber \\
& &  -2\frac{1}{n} \lambda   \sum_{i=1}^{m} \lp \u^{(2,2;i)} \rp^T\z^{(i)} 
 - \lambda  (f_1(\hat{\nu}))^2 .
  \label{eq:keyeq9}
\end{eqnarray}
Eigen-decomposition further gives
\begin{eqnarray}
 (1-r^2)I_{M\times M} +r^2\1_M\1_M^T 
 = U\Sigma U^T, 
  \label{eq:keyeq10}
\end{eqnarray}
 with
 \begin{eqnarray}
 UU^T=I_{M\times M} \quad \mbox{ and } \quad  \sigma=\{1-r^2,1-r^2,\dots, 1-r^2,1+(M-1)r^2 \}. 
  \label{eq:keyeq11}
\end{eqnarray}
Change of variables $\z^{(i)}\rightarrow U^T\z^{(i)}$ then gives
\begin{eqnarray}
    \cL(\z,\lambda) & = &  
  -\frac{1}{n} \sum_{i=1}^{m} \lp \z^{(i)}\rp^T \mathrm{diag}(\sigma)\z^{(i)}
+ \frac{1}{n}  \sum_{i=1}^{m} \lambda  \| U^T\u^{(2,2;i)}\|_2^2  + \frac{1}{n}  \sum_{i=1}^{m}\lambda \|\z^{(i)}\|_2^2 
\nonumber \\
& &  -2\frac{1}{n} \lambda   \sum_{i=1}^{m} \lp U^T\u^{(2,2;i)} \rp^T \z^{(i)} 
 - \lambda  (f_1(\hat{\nu}))^2 
\nonumber \\
& = &   \frac{1}{n} \sum_{i=1}^{m} \lp \z^{(i)}\rp^T  (\lambda I - \mathrm{diag}(\sigma))\z^{(i)}
+ \frac{1}{n}  \sum_{i=1}^{m} \lambda  \| U^T\u^{(2,2;i)}\|_2^2    
\nonumber \\
& &  -2\frac{1}{n} \lambda   \sum_{i=1}^{m} \lp U^T\u^{(2,2;i)} \rp^T \z^{(i)} 
 - \lambda  (f_1(\hat{\nu}))^2 .
  \label{eq:keyeq12}
\end{eqnarray}
After taking the $\z^{(i)}$ derivatives, one obtains
\begin{eqnarray}
   \frac{ d \cL(\z,\lambda)}{d\z^{(i)}} & = &  
  \frac{2}{n}   (\lambda I -  \mathrm{diag}(\sigma)) \z^{(i)}
 -2\frac{1}{n} \lambda     U^T\u^{(2,2;i)}  .
  \label{eq:keyeq13}
\end{eqnarray}
Equalling the above derivative to zero gives for the optimal $\z^{(i)}$
\begin{eqnarray}
    \hat{\z}^{(i)} =
   \lambda    (\lambda I -  \mathrm{diag}(\sigma))^{-1}  U^T\u^{(2,2;i)}  .
  \label{eq:keyeq14}
\end{eqnarray}
Plugging this back into (\ref{eq:keyeq12}) gives
\begin{equation}
    \min_{\z\in\mR^{mM}}\cL(\z,\lambda)  
 =    -\frac{\lambda^2}{n} \sum_{i=1}^{m} \lp  U^T\u^{(2,2;i)} \rp^T  (\lambda I - \mathrm{diag}(\sigma))^{-1}  U^T\u^{(2,2;i)} 
+ \frac{1}{n}  \sum_{i=1}^{m} \lambda  \| U^T\u^{(2,2;i)}\|_2^2    
  - \lambda  (f_1(\hat{\nu}))^2 ,
  \label{eq:keyeq15}
\end{equation}
and 
\begin{equation}
 \mE_{{\mathcal U}_2}      \min_{\z\in\mR^{mM}}\cL(\z,\lambda)  
 =    -\frac{ m}{n} \sum_{i=1}^{M} \frac{\lambda^2}{\lambda - \sigma_i} 
+ \frac{m}{n} \lambda M   
  - \lambda  (f_1(\hat{\nu}))^2 ,
  \label{eq:keyeq16}
\end{equation}
Combining (\ref{eq:keyeq5}) and  (\ref{eq:keyeq16})   and relying on concentrations, we find 
\begin{eqnarray}
 \lp -f_{sq}^{(1)}(\infty) \rp^2  
   & \rightarrow &
   \lim_{n\rightarrow\infty}
     - \mE_{{\mathcal U}_2}  \max_{\lambda\geq \max(\sigma_i)}  \min_{\z\in\mR^{mM}}\cL(\z,\lambda)  
\nonumber \\
&     \rightarrow  &
      \max_{\lambda\geq \max(\sigma_i)}
    \lp -\alpha\sum_{i=1}^{M} \frac{\lambda^2}{\lambda - \sigma_i} 
+ \alpha\lambda M   
  - \lambda  (f_1(\hat{\nu}))^2 \rp
\nonumber \\
&     \rightarrow  &
      \max_{\lambda\geq \max(\sigma_i)}
\lp
     -\frac{\alpha (M-1)\lambda^2}{\lambda - (1-r^2)} 
- \frac{\alpha\lambda^2}{\lambda - (1+(M-1)r^2)}      
+ \alpha\lambda M   
  - \lambda  (f_1(\hat{\nu}))^2 \rp.
  \label{eq:keyeq17}
\end{eqnarray}

\subsection{Numerical results}
\label{sec:numres}

The above is sufficient to determine critical Hopfield-Hebbian dynamics capacity estimate on the first lifting level, $\alpha^{(1)}_c(M,r)$. In particular, keeping in mind (\ref{eq:ham0a1}), (\ref{eq:limlogpartfunsqrta0}), and (\ref{eq:corthmsflrdt2eq1a0b0}), one first has that  $\lp -f_{sq}^{(1)}(\infty) \rp^2 $ from (\ref{eq:keyeq17}) is the first lifting level estimate for $\xi(\delta)^2$. Utilizing (\ref{eq:ex0a14b5}) and (\ref{eq:ex0a16}), one then determines the $\alpha^{(1)}_c(M,r)$ as a maximal $\alpha$ such that $\xi_1(\delta)=-(1-2\delta)^2r^2 M -\xi(\delta)^2\approx
-(1-2\delta)^2r^2M- \lp -f_{sq}^{(1)}(\infty) \rp^2 $ is monotonically decreasing in $\delta$. The obtained results are shown in Figure \ref{fig:fig1}. To be in agreement with the existing literature  \cite{AglMFV26}, we take $n=1000$ and consider the whole range for correlation parameter $r=[0,1]$ in  the so-called full-rank regime with $\alpha M=2$ (where the \cite{AglMFV26}'s spectral methods slightly deviate from the practical dynamics performance). The colored values represent the average overlap of the dynamics output with the archetype ground truth assuming that one start in its vicinity (with the overlap $0.95$), whereas the full curve indicates the above theoretical prediction (for completeness, the spectral methods estimates obtained in \cite{AglMFV26} are shown in the figure as well). As can be seen, the theoretical predictions fairly closely follow the simulated results even for fairly small problem dimensions. Moreover, in the low $r$ regime, our results closely approach spectral predictions of \cite{AglMFV26}, confirming that they are indeed fairly accurate approximations.

We also mention that implementation of higher lifting levels is in principle doable but requires significant numerical effort. However, keeping in mind that in uncorrelated scenarios lifting effect after the first level is for all practical purposes negligible, pursuing higher lifting levels may be unnecessary.  At the very least in the highly correlated regimes with $r$ close to $1$ (but quite likely in the entire $r\in[0,1]$ range as well), we do not expect much of further corrections on higher levels.

 \begin{figure}[h]
\centering
\centerline{\includegraphics[width=1\linewidth]{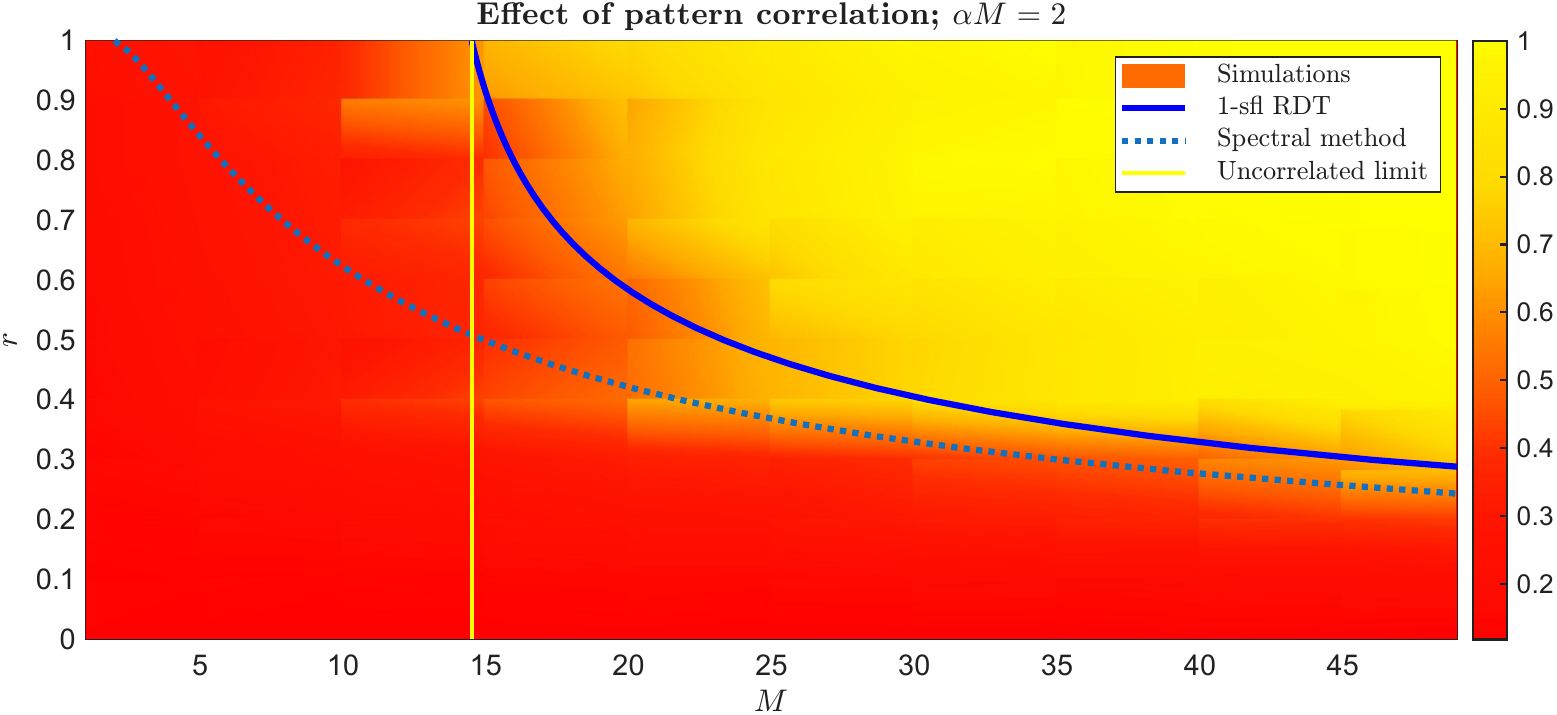}}
\caption{Capacity phase transition in Hopfield-Hebbian dynamics with correlated patterns}
\label{fig:fig1}
\end{figure}

\subsection{Special case $r=1$ -- uncorrelated patterns}
\label{sec:specuncor}

In Figure \ref{fig:fig1} we also show (as a solid yellow line) the limit one would obtain on the first lifting level in case of uncorrelated patterns. It is roughly given as $M\approx \frac{2}{0.1379}\approx 14.5033$, where $\approx 0.1379$ is the famous estimate given via replica methods (assuming replica symmetry ansatz) in \cite{AmiGutSom85} and reobtained on the first level of the fully lifted RDT in \cite{Stojnichebhop24}. The results given in figure indicate that as $r\rightarrow 1$ the above predictions actually approach $M\approx 14.5033$. We analytically confirm below that this is indeed the case.

Taking $M=1$ in (\ref{eq:keyeq17}), we have
\begin{eqnarray}
 \lp -f_{sq}^{(1)}(\infty) \rp^2  
 &     \rightarrow  &
      \max_{\lambda\geq 1}
\lp
- \frac{\alpha\lambda^2}{\lambda - 1}      
+ \alpha\lambda    
  - \lambda  (f_1(\hat{\nu}))^2 \rp \triangleq \max_{\lambda > 1}\zeta(\lambda).
  \label{eq:keyeq18}
\end{eqnarray}
Taking the derivative with respect to $\lambda$ gives
\begin{eqnarray}
 \frac{d\zeta(\lambda)}{d\lambda}=
 - \frac{2\alpha\lambda}{\lambda - 1}      
 + \frac{\alpha\lambda^2}{(\lambda - 1)^2}      
+ \alpha    
  -  (f_1(\hat{\nu}))^2   .
  \label{eq:keyeq19}
\end{eqnarray}
Equalling the above derivative to zero gives 
\begin{eqnarray}
\frac{\lambda}{\lambda -1} = 1+
    \frac{f_1(\hat{\nu})}{\sqrt{\alpha}}  ,
  \label{eq:keyeq20}
\end{eqnarray}
and optimal $\lambda$
\begin{eqnarray}
\hat{\lambda} = 1+
    \frac{\sqrt{\alpha}}{f_1(\hat{\nu})}  .
  \label{eq:keyeq21}
\end{eqnarray}
From (\ref{eq:keyeq18}), one then also has
\begin{equation}
 \lp -f_{sq}^{(1)}(\infty) \rp^2  
      \rightarrow  
 \zeta(\hat{\lambda}) = -\frac{\alpha\hat{\lambda}}{\hat{\lambda} -1} -\hat{\lambda} (f_1(\hat{\nu}))^2 =
 -\alpha -  \sqrt{\alpha}f_1(\hat{\nu}) - (f_1(\hat{\nu}))^2 - \sqrt{\alpha} f_1(\hat{\nu})
 = -( \sqrt{\alpha} + f_1(\hat{\nu}))^2 .
  \label{eq:keyeq22}
\end{equation}
Keeping in mind (\ref{eq:negprac19a4}) and recalling $\hat{\nu}=-\sqrt{2} \erfinv \lp 1-2\delta\rp$, one also finds
\begin{eqnarray} 
 \lp -f_{sq}^{(1)}(\infty) \rp^2  
      \rightarrow  
  -\lp \sqrt{\alpha} +  \frac{2}{\sqrt{2\pi}} e^{-\frac{\hat{\nu}^2}{2}} \rp^2 .
  \label{eq:keyeq23}
  \end{eqnarray}
Utilizing (\ref{eq:ex0a14b5}) and (\ref{eq:ex0a16}), one then determines the $\alpha^{(1)}_c(1,1)$ as a maximal $\alpha$ such that $\xi_1(\delta)=-(1-2\delta)^2 -\xi(\delta)^2\approx
-(1-2\delta)^2- \lp -f_{sq}^{(1)}(\infty) \rp^2 $ is monotonically decreasing in $\delta$. This is precisely what is done in \cite{Stojnichebhop24} where the above mentioned $\approx 0.1379$ estimate is obtained.

Now, taking $r=1$ with a general $M$ in (\ref{eq:keyeq17}) gives
\begin{eqnarray}
 \lp -f_{sq}^{(1)}(\infty) \rp^2  
 &     \rightarrow  &
      \max_{\lambda\geq M}
\lp
     -\frac{\alpha (M-1)\lambda^2}{\lambda } 
- \frac{\alpha\lambda^2}{\lambda - M}      
+ \alpha\lambda M   
  - \lambda  (f_1(\hat{\nu}))^2 \rp
\nonumber \\
   &   =  &
      \max_{\lambda\geq M}
\lp
 - \frac{\alpha\lambda^2}{\lambda - M}      
+ \alpha\lambda    
  - \lambda  (f_1(\hat{\nu}))^2 \rp
\nonumber \\
   &   =  &
      \max_{\lambda\geq M}
\lp
 - \frac{\alpha M\lambda^2}{M^2\lp\frac{\lambda}{M} - 1\rp}      
+ \alpha M\frac{\lambda}{M}    
  - M \frac{\lambda}{M}  (f_1(\hat{\nu}))^2 \rp
\nonumber \\
   &   =  &
M      \max_{\lambda\geq M}
\lp
 - \frac{\alpha \lambda^2}{M^2\lp\frac{\lambda}{M} - 1\rp}      
+ \alpha \frac{\lambda}{M}    
  -  \frac{\lambda}{M}  (f_1(\hat{\nu}))^2 \rp
  .
  \label{eq:keyeq24}
\end{eqnarray}
Change of variables $\lambda\rightarrow \frac{\lambda}{M}$ allows to rewrite (\ref{eq:keyeq24}) as
\begin{eqnarray}
 \lp -f_{sq}^{(1)}(\infty) \rp^2  
 &     \rightarrow  &
        \max_{\lambda\geq 1}
\lp
 - \frac{\alpha \lambda^2}{\lambda - 1}      
+ \alpha \lambda    
  -  \lambda  (f_1(\hat{\nu}))^2 \rp
\triangleq \max_{\lambda > 1}\zeta(\lambda)  ,
  \label{eq:keyeq25}
\end{eqnarray}
which is structurally identical to (\ref{eq:keyeq18}). Consequently, all of the above analysis applies with the same concluding estimate $\approx 0.1379$. This then analytically confirms that full yellow and blue curves in Figure \ref{fig:fig1} indeed coincide for $r=1$. In other words, the machinery that we have developed for uncorrelated patterns -- when specialized to quality of correlation parameters that actually correspond to the no correlation scenario -- matches the corresponding known results for the uncorrelated patterns.

\section{Conclusion}
\label{sec:conc}

Over the last several decades, the classical Hopfield-Hebbian dynamics \cite{Hop82} has been thoroughly studied in idealized scenarios with uncorrelated patterns.   Such settings allow  for proper understanding of main concepts and are sufficiently simple to enable analytical tractability. However,  practical data are rarely uncorrelated and consequently significantly more challenging to analyze precisely. Even implementation of traditional statistical mechanics methods is a demanding task in those situations. 

We consider the so-called archetype/examples type of correlations where each of $m$ \emph{archetype} ground truth patterns is replaced with $M$ correlated \emph{examples} with parameter $r$ governing the quality of correlation. A powerful Random Duality Theory (RDT) based framework is developed to analyze correlated data. The  capacity characterizations already on the first level of RDT lifting are in an excellent agreement with the simulated dynamics and precisely determine correlation parameters $M$ and $r$ effects on the dynamics phase-transitioning behavior. Moreover, they very well complement the existing literature. In particular, we confirm that \cite{AglMFV26}'s elegant spectral characterizations are excellent approximations in the low-correlated (small $r$) regime. We also close the gap in the highly correlated regime where spectral methods tend to deviate from the simulated dynamics.

\begin{singlespace}
\bibliographystyle{plain}
\bibliography{nflgscompyxRefs}
\end{singlespace}

\appendix

\section{Proof of Corollary \ref{cor:corsflrdt2}}
\label{sec:app0}

Utilizing results of Corollary \ref{cor:corsflrdt1}, we have the following for the \emph{random dual}
\begin{align}\label{eq:prac1}
    \psi_{rd}(\p,\q,\c)   & \triangleq    \frac{y^2}{2}    \sum_{k=2}^{r+1}\Bigg(\Bigg.
   \p_{k-1}\q_{k-1}
   -\p_{k}\q_{k}
  \Bigg.\Bigg)
\c_k
  - \bar{\psi}_{S,\infty}(f(\y),\cX(\delta),y\mS^{mM},\p,\q,\c,s) 
  \nonumber \\
  & =   \frac{y^2}{2}    \sum_{k=2}^{r+1}\Bigg(\Bigg.
   \p_{k-1}\q_{k-1}
   -\p_{k}\q_{k}
  \Bigg.\Bigg)
\c_k
  - \frac{1}{n}\varphi(D^{(per)}(s),
  \c) - \frac{1}{n}\varphi(D^{(sph)}(s),\c)
    \nonumber \\
  & =   \frac{y^2}{2}    \sum_{k=2}^{r+1}\Bigg(\Bigg.
   \p_{k-1}\q_{k-1}
   -\p_{k}\q_{k}
  \Bigg.\Bigg)
\c_k
  - \frac{1}{n}\varphi(D^{(tot)}(s),
  \c) , 
  \end{align}
where, based on (\ref{eq:fl4})-(\ref{eq:fl5}),
  \begin{eqnarray}\label{eq:prac2a0}
\varphi(D,\c) & \triangleq &
 \mE_{G,{\mathcal U}_{r+1}} \frac{1}{\c_r} \log
\lp \mE_{{\mathcal U}_{r}} \lp \dots \lp \mE_{{\mathcal U}_3}\lp\lp\mE_{{\mathcal U}_2} \lp
\lp    e^{D}   \rp^{\c_2}\rp\rp^{\frac{\c_3}{\c_2}}\rp\rp^{\frac{\c_4}{\c_3}} \dots \rp^{\frac{\c_{r}}{\c_{r-1}}}\rp, \nonumber \\
  \end{eqnarray}
and
\begin{eqnarray}\label{eq:prac3}
D^{(per)}(s) & = & \max_{\x\in\cX(\delta)} \lp   s\sqrt{n}   y   \lp\sum_{k=2}^{r+1}c_k\h^{(k)}\rp^T\x  \rp 
\nonumber \\
  D^{(sph)}(s) & = &  \max_{\z\in\mR^n}  s \max_{\y\in\mS^m}
\lp  \sqrt{n} f(\y) + \sqrt{n}  \y^T\lp\sum_{k=2}^{r+1}b_k\u^{(2,k)}\rp \rp
\nonumber \\
  D^{(tot)}(s) & =  &  D^{(per)}(s) + D^{(sph)}(s) .
 \end{eqnarray}

\noindent  \underline{\textbf{\emph{(i) Handling $D^{(per)}(-1)$:}}} This is presented in \cite{Stojnichebhop24}. For completeness, we repeat some of the key steps. We start with cosmetic change $\h^{(k)} \rightarrow -\h^{(k)}$  (which, due to the symmetry of $\h^{(k)}$, is of no statistical relevance) 
\begin{eqnarray}\label{eq:prac4a0}
D^{(per)}(-1) & = &  y\max_{\x\in\cX(\delta)}   \lp \sqrt{n}      \lp\sum_{k=2}^{r+1}c_k\h^{(k)}\rp^T\x \rp =
 y\sqrt{n} \max_{\x\in\cX(\delta)}   \lp      \lp\sum_{k=2}^{r+1}c_k\h^{(k)}\rp^T\x \rp.
 \end{eqnarray}
The following $\cX$ parametrization allows to reformulate (\ref{eq:ex0a15}) as
\begin{eqnarray}
\cX(\delta) & \triangleq & \left \{\x \,\, | \quad \|\x-\bg^{(1)}\|_2^2=4\delta,\x\in S_H^n\right \} \nonumber \\
 & = & \left \{\x \,\, | \quad \left \|\x-\frac{1}{\sqrt{n}}\right \|_2^2=4\delta,\x\in S_H^n\right \} 
\nonumber \\
& = & \left \{\x \,\, | \quad \x=\frac{\1}{\sqrt{n}}-\z,\z\in \left \{0,\frac{2}{\sqrt{n}} \right \}^n, \sum_{i=1}^{n}\z_i=2\delta\sqrt{n}\right \},
 \label{eq:prac4a0a1}
\end{eqnarray}
where we recall $\bg^{(1)}=\frac{\1}{\sqrt{n}}$. Combining  (\ref{eq:prac4}) and (\ref{eq:prac4a0a1}) we also find
\begin{eqnarray}\label{eq:prac4a0a2}
D^{(per)}(-1) & = &  -\sqrt{n} y \min_{\z}   \lp     - \lp\sum_{k=2}^{r+1}c_k\h^{(k)}\rp^T\lp \frac{\1}{\sqrt{n}} -\z\rp \rp \nonumber \\
\mbox{subject to} & &  \z\in \left \{0,\frac{2}{\sqrt{n}} \right \}^n, \sum_{i=1}^{n}\z_i=2\delta\sqrt{n}.
 \end{eqnarray}
Writing the Lagrangian gives
\begin{eqnarray}\label{eq:prac4a0a3}
D^{(per)}(-1) & =  &  -\sqrt{n} y \min_{\z\in \left \{0,\frac{2}{\sqrt{n}} \right \}^n} \max_{\nu}
  \lp     - \lp\sum_{k=2}^{r+1}c_k\h^{(k)}\rp^T \lp\frac{\1}{\sqrt{n}}-\z\rp -\nu \sum_{i=1}^{n}\z_i+2\nu\delta\sqrt{n} \rp \nonumber \\
  & =  &  -\sqrt{n} y \max_{\nu} \min_{\z\in \left \{0,\frac{2}{\sqrt{n}} \right \}^n}
  \lp     - \lp\sum_{k=2}^{r+1}c_k\h^{(k)}\rp^T \lp\frac{\1}{\sqrt{n}}-\z\rp -\nu \sum_{i=1}^{n}\z_i+2\nu\delta\sqrt{n} \rp,
 \end{eqnarray}
where the second equality is implied by the strong Lagrangian duality. Optimizing over $\z$, we further find
\begin{eqnarray}\label{eq:prac4a0a4}
D^{(per)}(-1) & =  & - y\max_{\nu} \lp 2\nu\delta n  + 2 \min\lp \sum_{k=2}^{r+1}c_k\h^{(k)} -\nu,0 \rp  - \lp\sum_{k=2}^{r+1}c_k\h^{(k)}\rp^T\1 \rp
\nonumber \\
  & =  &  y\min_{\nu} \lp  -2\nu\delta n  - 2 \min\lp \sum_{k=2}^{r+1}c_k\h^{(k)} -\nu,0 \rp  + \lp\sum_{k=2}^{r+1}c_k\h^{(k)}\rp^T\1 \rp \nonumber \\
   & =  & y \min_{\nu} \lp  -2\nu\delta n  + \sum_{i=1}^nD^{(per)}_i(c_k) \rp,
 \end{eqnarray}
where
\begin{eqnarray}\label{eq:prac5}
D^{(per)}_i(c_k)=-2\min \lp\sum_{k=2}^{r+1}c_k\h_i^{(k)}-\nu,0\rp +\sum_{k=2}^{r+1}c_k\h_i^{(k)}.
\end{eqnarray}

\noindent \underline{\textbf{\emph{(ii) Handling $D^{(sph)}(-1)$:}}}  We first observe
\begin{align}\label{eq:prac7}
   D^{(sph)}(-1) & \triangleq     \sqrt{n} \max_{\z\in\mR^n} - \max_{\y\in y\mS^{mM}}
\lp f(\y)+  \y^T\lp\sum_{k=2}^{r+1}b_k\u^{(2,k)}\rp \rp
\nonumber \\
& =  
-\sqrt{n} \min_{\z\in\mR^n}  \max_{\y\in y\mS^{mM}}
\lp -\sqrt{\z^T I_{m\times m}\otimes ((1-r^2)I_{M\times M} +r^2\1_M\1_M^T)\z}
-\y^T\z
 +  \y^T\lp\sum_{k=2}^{r+1}b_k\u^{(2,k)}\rp \rp
 \nonumber \\
& = 
  - \sqrt{n} \min_{\z\in\mR^n} 
\lp -\sqrt{\z^T I_{m\times m}\otimes ((1-r^2)I_{M\times M} +r^2\1_M\1_M^T)\z}
+  y  \left \| \sum_{k=2}^{r+1}b_k\u^{(2,k)}  - \z \right \|_2  \rp.
 \end{align}
A combination of (\ref{eq:prac4a0a4}) and (\ref{eq:prac7}) gives
\begin{eqnarray}\label{eq:prac7a0}
   D^{(tot)}(-1) & =  &  
     -  \min_{\z\in\mR^n} 
\Bigg (\Bigg . -\sqrt{n}\sqrt{\z^T I_{m\times m}\otimes ((1-r^2)I_{M\times M} +r^2\1_M\1_M^T)\z}
+ \sqrt{n} y \Bigg (\Bigg .
 \left \| \sum_{k=2}^{r+1}b_k\u^{(2,k)}  - \z \right \|_2  
\nonumber \\
& & 
-\min_{\nu} \lp  -2\nu\delta n  + \sum_{i=1}^nD^{(per)}_i(c_k) \rp \Bigg . \Bigg )
\Bigg . \Bigg ).
 \end{eqnarray}
 Comparing (\ref{eq:prac7a0}) to (\ref{eq:prac4}) completes the proof of Corollary \ref{cor:corsflrdt2}
and is sufficient for what we eventually need. For the completeness, we also show below the remaining decoupling over $\z$. Utilizing the \emph{square root trick}, we find
\begin{align}\label{eq:prac8}
   D^{(sph)} (-1)
&=  -\sqrt{n} \min_{\z\in\mR^n,\gamma_{z},\gamma} 
\Bigg (\Bigg . \frac{-(1-r^2)\sum_{i=1}^{mM}\z_i^2 +r^2 \sum_{i=1}^{m}\min_{\gamma_{z,i}} \lp  -2\gamma_{z,i} \1_M^T\z_{(i-1)M+1:iM} +\gamma_{z,i}^2 \rp }{4\gamma_{z}} -\gamma_{z}
\nonumber \\
& \hspace{.15in} +
 \frac{\left \|\sum_{k=2}^{r+1}b_k\u^{(2,k)} -\z\right \|_2^2}{4\gamma}+\gamma  
\Bigg .\Bigg ) 
\nonumber \\
&=  -\sqrt{n} \min_{\z\in\mR^n,\gamma_{z},\gamma,\gamma_{z,i}} 
\Bigg (\Bigg . \frac{-(1-r^2)\sum_{i=1}^{mM}\z_i^2 +r^2 \sum_{i=1}^{m}  \lp  -2\gamma_{z,i} \1_M^T\z_{(i-1)M+1:iM} +\gamma_{z,i}^2 \rp}{4\gamma_z} -\gamma_z +
\nonumber \\
& \hspace{.15in} +
  \frac{\sum_{i=1}^{mM}\lp\sum_{k=2}^{r+1}b_k\u_i^{(2,k)} -\z_i \rp^2}{4\gamma}+\gamma  
\Bigg .\Bigg ) 
,
 \end{align}
 which provides full decoupling and is trivial to incorporate in $D^{(tot)}(-1)$.

\end{document}